\newtheorem{theorem}{Theorem} 
\newtheorem{corollary}{Corollary} 
\newtheorem{lemma}{Lemma} 
\newtheorem{proposition}{Proposition} 
\newtheorem{assumption}{Assumption}
\newtheorem{definition}{Definition} 
\title{Finite Sample Analysis of the GTD Policy Evaluation Algorithms in Markov Setting}
\newcommand{\E}{\mathbb{E}}
\newcommand{\huaf}{\mathcal{F}}
\newcommand\numberthis{\addtocounter{equation}{1}\tag{\theequation}}
\DeclareMathOperator*{\argmax}{argmax}
\author{
	 Yue Wang 	\thanks{This work was done when the first author was visiting Microsoft Research Asia.} \\
	School of  Science\\
	Beijing Jiaotong University\\
	\texttt{11271012@bjtu.edu.cn} \\	
	%% examples of more authors
	\And
	Wei Chen \\
	Microsoft Research\\
	\texttt{wche@microsoft.com  } \\
	\And
	Yuting Liu \\
	School of  Science\\
	Beijing Jiaotong University \\
	\texttt{ytliu@bjtu.edu.cn  } \\
	\And
	Zhi-Ming Ma \\
	Academy of Mathematics and Systems Science\\
	Chinese Academy of Sciences \\
	\texttt{mazm@amt.ac.cn  } \\  
	\And
	Tie-Yan Liu \\
	Microsoft Research\\
	\texttt{Tie-Yan.Liu@microsoft.com } \\   	
}
\begin{document}
	% \nipsfinalcopy is no longer used
	
	\maketitle
	
	\begin{abstract}
		
		In reinforcement learning (RL) , one of the key components is policy evaluation, which aims to estimate the value function (i.e., expected long-term accumulated reward) of a policy. With a good policy evaluation method, the RL algorithms will estimate the value function more accurately and find a better policy. When the state space is large or continuous \emph{Gradient-based Temporal Difference(GTD)} policy evaluation algorithms with linear function approximation are widely used. Considering that the collection of the evaluation data is both time and reward consuming, a clear understanding of the finite sample performance of the policy evaluation algorithms is very important to reinforcement learning. Under the assumption that data are i.i.d. generated, previous work provided the finite sample analysis of the GTD algorithms with constant step size by converting  them into  convex-concave saddle point problems. However, it is well-known that, the data are generated from Markov processes rather than i.i.d. in RL problems.. In this paper, in the realistic Markov setting, we derive the finite sample bounds for the general convex-concave saddle point problems, and hence for the GTD algorithms. We have the following discussions based on our bounds. (1) With variants of step size, GTD algorithms converge. (2) The convergence rate is determined by the step size, with the mixing time of the Markov process as the coefficient. The faster the Markov processes mix, the faster the convergence. (3) We explain that the experience replay trick is effective by improving the mixing property of the Markov process.  To the best of our knowledge, our analysis is the first to provide finite sample bounds for the GTD algorithms in Markov setting.

	\end{abstract}

	\section{Introduction}

	Reinforcement Learning (RL) (\cite{sutton1998reinforcement}) technologies are very powerful to learn how to interact with environments, and has variants of important applications, such as robotics, computer games and so on (\cite{kober2013reinforcement}, \cite{mnih2015human}, \cite{silver2016mastering}, \cite{bahdanau2016actor}). 
	
	In RL problem, an agent observes the current state, takes an action following a policy at the current state, receives a reward from the environment, and the environment transits to the next state in Markov, and again repeats these steps. The goal of the RL algorithms is to find the optimal policy which leads to the maximum long-term reward. The value function of a fixed policy for a  state is defined as the expected long-term accumulated reward the agent would receive by following the fixed policy starting from this state. Policy evaluation aims to accurately estimate the value of all states under a given policy, which is a key component in RL (\cite{sutton1998reinforcement}, \cite{dann2014policy}). A better policy evaluation method will help us to better improve the current policy and find the optimal policy.
	
	When the state space is large or continuous, it is inefficient to represent the value function over all the states by a look-up table. A common approach is to extract features for states and use parameterized function over the feature space to approximate the value function. In applications, there are linear approximation and non-linear approximation (e.g. neural networks) to the value function. In this paper, we will focus on the linear approximation (\cite{sutton2009convergent},\cite{sutton2009fast},\cite{liu2015finite}). Leveraging the localization technique in \cite{bhatnagar2009convergent}, the results can be generated into non-linear cases with extra efforts. We leave it as future work. 
	
	In policy evaluation with linear approximation, there were substantial work for the temporal-difference (TD) method, which uses the Bellman equation to update the value function during the learning process (\cite{sutton1988learning},\cite{tsitsiklis1997analysis}). Recently, \cite{sutton2009convergent} \cite{sutton2009fast} have proposed \emph{Gradient-based Temporal Difference (GTD)} algorithms which use gradient information of the error from the Bellman equation to update the value function.  It is shown that, GTD algorithms have achieved the lower-bound of the storage and computation complexity, making them powerful to handle high dimensional big data.  Therefore, now GTD algorithms are widely used in policy evaluation problems and the policy evaluation step in practical RL algorithms (\cite{bhatnagar2009convergent},\cite{silver2014deterministic}).  %, and researchers are keep improving GTD algorithms by designing different step size  and empirical tricks like experience replay \cite{}\cite{}.

	However, we don’t have sufficient theory to tell us about the finite sample performance of the GTD algorithms. To be specific, will the evaluation process converge with the increasing of the number of the samples? If yes, how many samples we need to get a target evaluation error? Will the step size in GTD algorithms influence the finite sample error? How to explain the effectiveness of the practical tricks, such as experience replay? Considering that the collection of the evaluation data is very likely to be both time and reward consuming, to get a clear understanding of the finite sample performance of the GTD algorithms is very important to the efficiency of policy evaluation and the entire RL algorithms.
	
	Previous work (\cite{liu2015finite}) converted the objective function of GTD algorithms into a convex-concave saddle problem and conducted the finite sample analysis for GTD with constant step size under the assumption that data are i.i.d. generated. However, in RL problem, the date are generated from an agent who interacts with the environment step by step, and the state will transit in Markov as introduced previously. As a result, the data are generated from a Markov process but not i.i.d..  In addition, the work did not study the decreasing step size, which are also commonly-used in many gradient based algorithms(\cite{sutton2009convergent},\cite{sutton2009fast},\cite{yu2015convergence}). Thus, the results from previous work cannot provide satisfactory answers to the above questions for the finite sample performance of the GTD algorithms.
	
	In this paper, we perform the finite sample analysis for the GTD algorithms in the more realistic Markov setting. To achieve the goal, first of all, same with \cite{liu2015finite}, we consider the stochastic gradient descent algorithms of the general convex-concave saddle point problems, which include the GTD algorithms. The optimality of the solution is measured by the primal-dual gap (\cite{liu2015finite}, \cite{nemirovski2009robust}). The finite sample analysis for convex-concave optimization in Markov setting is challenging. On one hand, in Markov setting, the non-i.i.d. sampled gradients are no longer unbiased estimation of the gradients. Thus, the proof technique for the convergence of convex-concave problem in i.i.d. setting cannot be applied. On the other hand, although SGD converge for convex optimization problem with the Markov gradients, it is much more difficult to obtain the same results in the more complex convex-concave optimization problem.

	To overcome the challenge, we design a novel decomposition of the error function (i.e. Eqn (\ref{keydecomposition})). The intuition of the decomposition and key techniques are as follows:  {(1)} Although samples are not i.i.d., for large enough $ \tau $, the sample at time $t+\tau$ is "nearly independent"  of the sample  at time $t$, and its distribution is "very close" to the stationary distribution.  {(2)} We split the random variables in the objective related to $ \mathbf{\E} $ operator and the variables related to $ \mathbf{\max} $ operator into different terms in order to control them respectively. It is non-trivial, and we construct a sequence of auxiliary random variables to do so.  {(3)} All constructions above need to be carefully considered the measurable issues in the Markov setting. {(4)} We construct new martingale difference sequences and apply Azuma's inequality to derive the high-probability bound from the in-expectation bound.
	
	By using the above techniques, we prove a novel finite sample bound for the convex-concave saddle point problem. Considering the GTD algorithms are specific convex-concave saddle point optimization methods, we finally obtained the finite sample bounds for the GTD algorithms, in the realistic Markov setting for RL. To the best of our knowledge, our analysis is the first to provide finite sample bounds for the GTD algorithms in Markov setting.
	%	obtain the finite sample bound  Based on the bound in expectation, we 

	We have the following discussions based on our finite sample bounds. 
	\begin{enumerate}
		\item	  GTD algorithms do converge, under a flexible condition on the step size, i.e.  $\sum_{t=1}^{T}\alpha_t \to \infty, \frac{\sum_{t=1}^{T}\alpha_t^2}{\sum_{t=1}^{T}\alpha_t} <\infty $, as $T\to \infty$, where $\alpha_t$ is the step size. Most of step sizes used in practice satisfy this condition. 
		\item  The convergence rate is  \small $ \mathcal{O}\left(\sqrt{(1+\tau(\eta))\frac{\sum_{t=1}^{T}\alpha_t^2}{\sum_{t=1}^{T}\alpha_t } + \frac{\sqrt{\tau(\eta)\log(\frac{\tau(\eta)}{\delta}) \sum_{t=1}^{T}\alpha_t^2}} {\sum_{t=1}^{T}\alpha_t}}  \right)  $\normalsize, where $\tau(\eta)$   is the mixing time of the Markov process, and $\eta$ is a constant. Different step sizes  will lead to different convergence rates.
		\item   The experience replay trick is effective, since it can improve the mixing property of the Markov process.  
	\end{enumerate}
	
	Finally, we conduct simulation experiments to verify our theoretical finding. All the conclusions from the analysis are consistent with our empirical observations. 
	
	\section{Preliminaries}
	In this section, we briefly introduce the GTD algorithms and related works.
	\subsection{Gradient-based TD algorithms}
	Consider the reinforcement learning problem with Markov decision process(MDP) $ (\mathcal{S},\mathcal{A},P,R,\gamma) $, where $ \mathcal{S} $ is the state space, $ \mathcal{A} $ is the action space, $ P= \{P_{s,s'}^a; s,s’\in \mathcal{S}, a\in\mathcal{A}\} $ is the transition matrix and $ P_{s,s'}^a $ is the transition probability from state $ s $ to state $ s' $ after taking action $ a $, $ R=\{R(s,a );s\in \mathcal{S},a\in\mathcal{A}$ is the reward function and $R(s,a)$ is the reward received at state $s$ if taking action $a$, and $ 0<\gamma<1 $ is the discount factor. 	
	A policy function $ \mu: \mathcal{A}\times\mathcal{S}\to [0,1]$ indicates the probability to take each action at each state.  Value function for policy $ \mu $ is defined as: $ 	V^{\mu}(s)\triangleq E\left[ \sum_{t=0}^{\infty}\gamma^t R(s_t,a_t)|s_0 = s,\mu  \right] $.

	In order to perform policy evaluation in a large state space, states are represented by a feature vector $\phi(s)\in\mathbb{R}^d$, and a linear   function  $ \hat{v}(s) = \phi(s)^\top \theta  $
	is used to approximate the value function. The evaluation error is defined as  $ \Vert V(s) - \hat{v}(s) \Vert_{s\sim \pi} $, which can be decomposed into approximation error and estimation error. In this paper, we will focus on the estimation error with linear function approximation. 
	
	As we know, the value function in RL satisfies the following Bellman equation: $ V^\mu(s)   = \E_{\mu,P}\left[R(s_t,a_t)+\gamma V^\mu(s_{t+1}) | s_t=s \right] \triangleq T^\mu V^\mu(s)  $,
	where $ T^\mu  $ is called Bellman operator for policy $\mu$.
	Gradient-based TD (GTD) algorithms (including GTD and GTD2) proposed by \cite{sutton2009convergent} and \cite{sutton2009fast}  update the approximated value function by minimizing the objective function related to  Bellman equation errors, i.e., the  norm of the expected TD update (NEU) and mean-square projected Bellman error  (MSPBE) respectively(\cite{maei2011gradient},\cite{liu2015finite})	,
	\begin{align}
	GTD&:  \quad J_{NEU}(\theta)  =\Vert \Phi^\top K (T^\mu\hat{v}-\hat{v}) \Vert^2\\
	GTD2&: \quad J_{MSPBE}(\theta)  = \Vert \hat{v} - \mathcal{P} T^\mu\hat{v}\Vert = \Vert \Phi^\top K (T^\mu\hat{v}-\hat{v}) \Vert^2_{C^{-1}}
	\end{align}
	where $ K $ is a   diagonal matrix whose elements are $ \pi(s) $, $ C  = \E_\pi(\phi_i\phi_i^\top ) $,  and $ \pi $ is a distribution over the state space $ \mathcal{S} $.
	
	Actually, the two objective functions in GTD and GTD2 can be unified as below 
	\begin{align}\label{rl objective function}
	J(\theta) =  \Vert b-A\theta \Vert_{M^{-1}}^2,
	\end{align} 	
	where $M=I$ in GTD, $ M = C$, in GTD2,  $  A=\E_\pi[\rho(s,a)  \phi(s)(\phi(s)-\gamma\phi(s'))^\top],  b=\E_\pi[\rho(s,a) \phi(s) r], \rho(s,a) = \mu(a |s ) / \mu_b(a |s )$ is  the importance weighting factor. Since the underlying distribution is unknown, we use the data $ \mathcal{D} = \left \lbrace \xi_i= (s_i,a_i,r_i,s_i') \right\rbrace_{i=1}^n $ to estimate the value function by minimizing the empirical estimation error, i.e., 
	\small$$ \hat{J}(\theta) =  1/T\sum_{i=1}^T\Vert \hat{b}-\hat{A}\theta \Vert_{\hat{M}^{-1}}^2$$\normalsize   
	where $ \hat{A}_i= \rho(s_i,a_i) \phi(s_i)(\phi(s_i)-\gamma\phi(s_i'))^\top,  \hat{b}_i= \rho(s_i,a_i)\phi(s_i) r_i, \hat{C}_i = \phi(s_i)\phi(s_i) ^\top $.		
	
	\cite{liu2015finite} derived that the GTD algorithms to minimize     (\ref{rl objective function}) is equivalent to the stochastic gradient algorithms to solve the following convex-concave saddle point problem
	\small \begin{align}\label{gtd minmax form}
	\min_x\max_y\left( L(x,y) = \langle b-Ax,y \rangle - \frac{1}{2}\Vert y\Vert^2_M \right),
	\end{align}\normalsize
	with $ x $ as the  parameter $ \theta $ in the value function, $ y $ as the auxiliary variable used in GTD algorithms.
	\small\begin{algorithm}[tb] \label{gtd algorithms}			
		\caption{ GTD Algorithms}
		\begin{algorithmic}[1]\small
			\FOR{$ t = 1, \dots, T $}
			\STATE  
			$ \text{Update parameters:   }\quad 	 y_{t+1} = \mathcal{P}_{\mathcal{X}_y}\left(y_t + \alpha_t(\hat{b}_t - \hat{A}_t\theta_t -\hat{M}_ty_t)\right) \quad x_{t+1} = \mathcal{P}_{\mathcal{X}_x}\left(x_t + \alpha_t\hat{A}_t^\top y_t\right) $
			\ENDFOR
			\ENSURE 				
			$ \quad 	 \tilde{x}_T = \frac{\sum_{t=1}^{T}\alpha_t x_t}{\sum_{t=1}^{T}\alpha_t}  \qquad \tilde{y}_T = \frac{\sum_{t=1}^{T}\alpha_t y_t}{\sum_{t=1}^{T}\alpha_t} $				
		\end{algorithmic}
	\end{algorithm}	\normalsize
	Therefore, we consider the general convex-concave stochastic saddle point problem as below 
	\begin{align}\label{saddle point problem}
	\min_{x\in\mathcal{X}_x}\max_{y\in\mathcal{X}_y}\lbrace \phi(x,y) = \E_\xi[\Phi(x,y,\xi)]\rbrace,
	\end{align}
	where $ \mathcal{X}_x \subset \mathbb{R}^n $ and $ \mathcal{X}_y \subset \mathbb{R}^m$ are bounded closed convex sets, $ \xi \in \Xi $ is random variable and its distribution is $ \Pi(\xi) $, and the expected function $ \phi(x,y) $ is convex in $ x $ and concave in $ {y } $. 	Denote  $    z = (x,y) \in \mathcal{X}_x\times \mathcal{X}_y \triangleq \mathcal{X}$, the gradient  of $\phi(z)$ as $ g(z) $, and the gradient of  $ \Phi(z,\xi) $ as $ G(z,\xi) $.
	
	In the stochastic gradient algorithm, the model is updated as: $ 	z_{t+1} =	\mathcal{P}_{\mathcal{X} }( z_t - \alpha_t(G(z_t,\xi_t))) $, where  $ \mathcal{P}_{\mathcal{X}}  $ is the projection onto $\mathcal{X}$ and $\alpha_t$ is the step size.
	After $T$ iterations, we get the model 
	$ 	\tilde{z}_1^T = \frac{\sum_{t=1}^{T}\alpha_tz_t }{\sum_{t=1}^{T}\alpha_t } $.	
	The error of the model $ \tilde{z}_1^T $ is measured by the primal-dual gap error 
	\small\begin{equation}\label{error function}
	Err_\phi(\tilde{z}_1^T) = \max_{y\in \mathcal{X}_y}\phi( \tilde{x}_1^T,y ) - \min_{x\in\mathcal{X}_x}\phi(x,\tilde{y}_1^T).
	\end{equation} \normalsize	
	\cite{liu2015finite} proved that the estimation error of the GTD algorithms can be upper bounded by their corresponding primal-dual gap error multiply a factor. Therefore, we are going to derive the finite sample primal-dual gap error bound for the  convex-concave saddle point problem firstly, and then extend it to the finite sample estimation error bound for the GTD algorithms.
	
	Details of  GTD algorithms used to optimize (\ref{gtd minmax form}) are placed in \textbf{Algorithm 1}( \cite{liu2015finite}).
	
	\subsection{Related work}
	The TD algorithms for policy evaluation can be divided into two categories: gradient based methods and least-square(LS) based methods(\cite{dann2014policy}). Since LS based algorithms need $ \mathcal{O}(d^2) $ storage and computational complexity while GTD algorithms are both of $ \mathcal{O}(d) $ complexity, gradient based algorithms are more commonly used when the feature dimension is large. Thus, in this paper, we focus on GTD algorithms.
	
	\cite{sutton2009convergent}  proposed the gradient-based temporal difference (GTD) algorithm for off-policy policy  evaluation problem with linear function approximation.  \cite{sutton2009fast} proposed GTD2 algorithm which shows a faster convergence in practice. \cite{liu2015finite}  connected GTD algorithms to a convex-concave saddle point problem and derive a finite sample bound in both on-policy and off-policy cases for constant step size in i.i.d.  setting. 
	
	In the realistic Markov setting, although the finite sample bounds for LS-based algorithms have been proved (\cite{lazaric2012finite} \cite{tagorti2015rate}) LSTD($ \lambda $), to the best of our knowledge, there is no previous  finite sample analysis work for GTD algorithms. 
	
	\section{Main Theorems}
	
	In this section, we will present our main results. In    Theorem \ref{highprobability}, we present our finite sample bound for the general convex-concave saddle point problem; in Theorem \ref{gtdbound}, we provide the finite sample bounds for GTD algorithms in both on-policy and off-policy cases.  Please refer the complete proofs in the supplementary materials.

	Our results are derived based on the following common assumptions(\cite{nemirovski2004prox}, \cite{duchi2012ergodic}, \cite{liu2015finite}).
	% in RL (i.e., assumptions 1-4) or optimization (i.e., assumptions 5-6) \cite{liu2015finite}. 
	Please note that, the bounded-data property in assumption 4 in RL can guarantee the Lipschitz and smooth properties in assumption \ref{assumptionlipschitz}-\ref{assumptionsmooth} (Please see Propsition \ref{GTD bound prepare} ). 
	
	\begin{assumption}[Bounded parameter]\label{assumptioncompactness}
		There exists $ D>0 $, such that	$ \Vert z-z'\Vert \le D, ~~for~ \forall z,z'  \in \mathcal{X} $.
	\end{assumption}

	\begin{assumption}[Step size]\label{assumptionstepsize}
		The step size $ \alpha_t $ is non-increasing.
	\end{assumption}
	
	\begin{assumption}[Problem solvable]\label{assrl nonsingular}
		The matrix $ A  $ and $ C $ in Problem \ref{gtd minmax form} are non-singular.
	\end{assumption} 	
	
	\begin{assumption}[Bounded data]\label{assrl bound}
		Features are bounded  by $ L $, rewards are bounded by $ R_{max} $  and importance weights are bounded by $ \rho_{max} $.
	\end{assumption}
	\begin{assumption}[Lipschitz]\label{assumptionlipschitz}
		For $ \Pi $-almost every $ \xi $, the function $ \Phi(x,y,\xi) $ is Lipschitz   for both x and y, with finite constant $ L_{1x }, L_{1y}   $, respectively.  We Denote $ L_1 \triangleq \sqrt{2} \sqrt{L_{1x}^2 + L_{1y}^2} $.	
	\end{assumption}		
	\begin{assumption}[Smooth]\label{assumptionsmooth}			
		For $ \Pi $-almost every $ \xi $, the partial  gradient function  of $ \Phi(x,y,\xi) $ is Lipschitz  for both x and y  with finite constant $ L_{2x}, L_{2y}  $ respectively. We denote $ L_2 \triangleq  \sqrt{2}\sqrt{L_{2x}^2 + L_{2y}^2} $.			
	\end{assumption}
	
	For Markov process, the mixing time characterizes how fast the process converge to its stationary distribution.	Following the notation of \cite{duchi2012ergodic}, we denote the conditional probability distribution $ P(\xi_t \in A | \huaf_s) $ as $ P_{[s]}^t(A) $ and the corresponding probability density as $ p_{[s]}^t $. Similarly, we denote the stationary distribution of the data generating stochastic process as $ \Pi $ and its density  as $ \pi  $.

	\begin{definition}\label{def mixing time}
		The mixing time $ \tau(P_{[t]},\eta)$ of the sampling distribution P conditioned on the $ \sigma-$field of the initial t sample $ \huaf_t = \sigma(\xi_1,\dots,\xi_t) $ is defined as: $ \tau(P_{[t]},\eta)\triangleq \inf\left\lbrace \Delta  : t\in \mathbb{N}, \int |p_{[t]}^{t+\Delta}(\xi) - \pi(\xi)|d(\xi) \le \eta\right\rbrace $, where $ p_{[t]}^{t+\Delta} $ is the conditional probability density at time $t + \Delta $, given $\mathcal{F}_t$. 	
	\end{definition}

	\begin{assumption}[Mixing time]\label{assume mixing time}
		The mixing times of the stochastic process $ \lbrace\xi_t \rbrace$  are uniform. i.e., there exists uniform mixing times $\tau(P,\eta) \le \infty$ such that, with probability $1$, we have $ \tau(P_{[s]},\eta)\le \tau(P, \eta) $ for all $ \eta >0 $ and $ s \in \mathbb{N} $. 
	\end{assumption}

	Please note that, any time-homogeneous Markov chain with finite state-space and any uniformly ergodic Markov chains with general state space satisfy the above assumption(\cite{meyn2012markov}). For simplicity and without of confusion, we will denote $ \tau(P, \eta) $ as $ \tau(\eta) $.

	\subsection{Finite Sample Bound for Convex-concave Saddle Point Problem}
	
	\begin{theorem}\label{highprobability}
		
		Consider the convex-concave problem in Eqn (2.5). Suppose Assumption \ref{assumptioncompactness},\ref{assumptionstepsize},\ref{assumptionlipschitz},\ref{assumptionsmooth} hold. Then for the gradient algorithm optimizing the convex-concave saddle point problem in (\ref{saddle point problem}), for $\forall \delta>0$ and $\forall \eta > 0 $ such that $ \tau(\eta)\le T/2$, with probability at least $ 1-\delta$, we have
		\small
		\begin{align*}
		Err_\phi(\tilde{z}_1^T)   
		\le  		\frac{1}{\sum\limits_{t=1}^{T}\alpha_t} \Bigg[ A  &+B \sum_{t=1}^{T}\alpha_t^2 + C\tau(\eta)\sum_{t=1}^{T}\alpha_t^2  + F\eta\sum_{t=1}^{T}\alpha_t + H\tau(\eta) \\
		&+  8DL_1 \sqrt{2\tau(\eta) \log\frac{\tau(\eta)}{\delta} \left(\sum_{t=1}^{T}\alpha_t^2 + \tau(\eta)\alpha_0 \right)} \Bigg]
		\end{align*}
		\normalsize
		\begin{small}
			\begin{align*}
			\text{where : }
			A = D^2   \qquad
			B = \frac{5}{2} L_1^2 \qquad
			C = 6L_1^2 + 2L_1L_2D \qquad
			F = 2L_1D  \qquad
			H = 6L_1D\alpha_0  
			\end{align*} 
			
		\end{small}

	\end{theorem}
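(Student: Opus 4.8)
The plan is to first reduce the primal--dual gap to a uniform linear bound, and then to split that bound into a telescoping part, a Markovian-bias part, and a concentration part. By convexity of $\phi$ in $x$ and concavity in $y$, Jensen's inequality applied to the averaged iterate $\tilde z_1^T$ gives
$Err_\phi(\tilde z_1^T)\le \frac{1}{\sum_{t}\alpha_t}\max_{z\in\mathcal{X}}\sum_{t=1}^{T}\alpha_t\langle g(z_t),z_t-z\rangle$, where $g(z_t)$ is the saddle-point gradient used in the algorithm (gradient in $x$, negative gradient in $y$), since $\phi(x_t,y)-\phi(x,y_t)\le\langle g(z_t),z_t-z\rangle$ pointwise. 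I would then use non-expansiveness of $\mathcal{P}_{\mathcal{X}}$ with the update $z_{t+1}=\mathcal{P}_{\mathcal{X}}(z_t-\alpha_t G(z_t,\xi_t))$ to get the one-step inequality $\alpha_t\langle G(z_t,\xi_t),z_t-z\rangle\le\tfrac12(\|z_t-z\|^2-\|z_{t+1}-z\|^2)+\tfrac{\alpha_t^2}{2}\|G(z_t,\xi_t)\|^2$. Summing telescopes the first term into $\tfrac12\|z_1-z\|^2\le\tfrac12 D^2$ by Assumption \ref{assumptioncompactness}, bounds $\|G(z_t,\xi_t)\|\le L_1$ by Assumption \ref{assumptionlipschitz}, and leaves the noise term $\sum_t\alpha_t\langle g(z_t)-G(z_t,\xi_t),z_t-z\rangle$ to be controlled. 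This already accounts for the $A=D^2$ contribution and part of $B\sum_t\alpha_t^2$.

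The core is the decomposition of the noise term that makes the mixing time usable. For each $t>\tau(\eta)$ I would introduce the delayed iterate $z_{t-\tau(\eta)}$, which is $\huaf_{t-\tau(\eta)}$-measurable, and write $G(z_t,\xi_t)-g(z_t)$ as a sum of three brackets: $[G(z_t,\xi_t)-G(z_{t-\tau(\eta)},\xi_t)]$, $[G(z_{t-\tau(\eta)},\xi_t)-g(z_{t-\tau(\eta)})]$, and $[g(z_{t-\tau(\eta)})-g(z_t)]$. The outer two are displacement errors: smoothness (Assumption \ref{assumptionsmooth}) bounds them by $L_2\|z_t-z_{t-\tau(\eta)}\|$, and the iterate movement satisfies $\|z_t-z_{t-\tau(\eta)}\|\le L_1\sum_{s=t-\tau(\eta)}^{t-1}\alpha_s\le L_1\tau(\eta)\alpha_{t-\tau(\eta)}$ using Assumption \ref{assumptionstepsize}; after pairing with $\alpha_t$ and summing, these yield the $C\tau(\eta)\sum_t\alpha_t^2$ term, while the first $\tau(\eta)$ iterations, where no look-back exists, are bounded crudely to give $H\tau(\eta)$. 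For the middle bracket I would condition on $\huaf_{t-\tau(\eta)}$: its conditional mean equals $\int G(z_{t-\tau(\eta)},\xi)\big(p_{[t-\tau(\eta)]}^{t}(\xi)-\pi(\xi)\big)d\xi$, whose norm is at most $L_1\eta$ by Definition \ref{def mixing time}, producing the bias term $F\eta\sum_t\alpha_t$ after Cauchy--Schwarz against $\|z_t-z\|\le D$.

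To separate the randomness from the maximization over $z$ (technique (2) of the introduction), I would, following the saddle-point device of \cite{nemirovski2009robust} and \cite{liu2015finite}, introduce an auxiliary sequence $v_t$ driven only by the centered noise and split $\langle g(z_t)-G(z_t,\xi_t),z_t-z\rangle$ into a term $\langle\cdot,v_t-z\rangle$ handled by a telescoping recursion uniform in $z$ (contributing the remaining part of $B\sum_t\alpha_t^2$) and a term $\langle\cdot,z_t-v_t\rangle$ in which the optimization variable no longer appears. The centered middle-bracket contributions, together with this last sum, are then organized into martingale difference sequences; because the Markov dependence reaches back $\tau(\eta)$ steps, I would partition the indices into $\tau(\eta)$ interleaved subsequences, each a genuine martingale with respect to the appropriately thinned filtration, bound each increment by a multiple of $DL_1$, and apply Azuma--Hoeffding to each. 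Re-summing the $\tau(\eta)$ tail bounds and accounting for the boundary effect of the first $\tau(\eta)$ steps yields the high-probability term $8DL_1\sqrt{2\tau(\eta)\log(\tau(\eta)/\delta)\big(\sum_t\alpha_t^2+\tau(\eta)\alpha_0\big)}$ at confidence $1-\delta$.

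The main obstacle, and the step I expect to be most delicate, is exactly this interplay between the Markovian bias and the uniform control over $z$: in the i.i.d.\ case $\E[G(z_t,\xi_t)\mid\huaf_{t-1}]=g(z_t)$ makes the noise term an unbiased martingale directly, whereas here $\xi_t$ is correlated with $z_t$ through the whole trajectory, so one must look back $\tau(\eta)$ steps to recover near-independence while paying a Lipschitz price, and must do so without breaking the measurability needed both for the auxiliary-sequence decoupling and for the martingale step. Making the delayed reference, the auxiliary sequence, and the $\tau(\eta)$-blocked martingales simultaneously adapted to the correct $\sigma$-fields is where the careful bookkeeping of technique (3) is essential.
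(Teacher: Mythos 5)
Your proposal is correct and follows essentially the same route as the paper: the same convexity reduction to $\max_z \sum_t \alpha_t (z_t-z)^\top g(z_t)$, the same auxiliary sequence $v_t$ to decouple the maximization from the noise, the same mixing-time bias bound giving $F\eta\sum_t\alpha_t$, and the same $\tau(\eta)$-interleaved martingale blocks with Azuma's inequality yielding the $8DL_1\sqrt{2\tau(\eta)\log(\tau(\eta)/\delta)(\sum_t\alpha_t^2+\tau(\eta)\alpha_0)}$ term. The only cosmetic difference is that you shift the iterate backward (comparing $G(z_{t-\tau},\xi_t)$ with $G(z_t,\xi_t)$) where the paper's decomposition shifts the sample forward via $\delta_t'' = G(z_t,\xi_{t+\tau})-G(z_t,\xi_t)$, but the paper's own proof of its Lemma on term (c) re-indexes into exactly your form, so the two are equivalent.
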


	\begin{proof}[Proof Sketch of Theorem \ref{highprobability}]		
		
		By the definition of the error function in (\ref{error function}) and the property that $ \phi(x,y) $ is convex for $ x $ and concave for $ y $, the expected error can be bounded as below
		\small
		$$        Err_\phi(\tilde{z}_1^T)   \le     \max_z \frac{1}{\sum_{t=1}^{T}\alpha_t}\sum_{t=1}^{T}\alpha_t \left[(z_t-z)^\top g(z_t) \right].$$\normalsize
		%		where $ \Gamma_1^T $ is defined in previous section.
		Denote $ \delta_t \triangleq g(z_t) - G(z_t,\xi_{t}) $, $ \delta'_t \triangleq g (z_t) - G( z_t,\xi_{t+\tau} ) $,  $ \delta''_t \triangleq G (z_t,\xi_{t+\tau}) -  G (z_t,\xi_{t})  $. Constructing  $ \lbrace v_t \rbrace_{t\ge 1} $  which is measurable with respect to $ \huaf_{t-1} $,$v_{t+1} = P_{\mathcal{X}}\big(v_t - \alpha_t( g (z_t) - G(z_t,\xi_{t})) \big) $. We have the following key decomposition to the right hand side in the above inequality,  the initiation and the explanation for such decomposition is placed in supplementary materials. For $ \forall \tau \ge 0$:	 
		
		\small\begin{align*}\numberthis\label{keydecomposition} 
		\max_z \sum_{t=1}^{T}\alpha_t \left[(z_t-z)^\top g(z_t) \right]   & =	    \max_z \Biggl[ \sum\limits_{t=1}^{T-\tau}\alpha_t \bigg[\underbrace{(z_t-z)^\top  G (z_t,\xi_{t})}_{(a)} +\underbrace{(z_t-v_t)^\top \delta'_t}_{(b)}  \\
		& + \underbrace{(z_t-v_t)^\top \delta''_t}_{(c)} + \underbrace{(v_t-z)^\top \delta_t }_{(d)}    \bigg]
		+  \underbrace{ \sum_{t=T-\tau + 1}^{T}\alpha_t \left[(z_t-z)^\top g(z_t)\right]}_{(e)}\Biggr].
		\end{align*} 	\normalsize

		For term(a), we split $ G(z_t,\xi_t) $ into three terms by the definition of $ \mathcal{L} _2$-norm and the iteration formula of $ z_t $, and then we bound its summation by $ \sum_{t=1}^{T-\tau} \left(\Vert \alpha_t G(z_t,\xi_t) \Vert^2  + \Vert z_t - z \Vert ^2 - \Vert z_{t+1}  - z\Vert^2 \right) $. Actually, in the summation, the last two terms will be eliminated except for their first and the last terms. Swap the $ \max $ and $ \sum $ operators and use the Lipschitz Assumption \ref{assumptionlipschitz}, the first term can   be bounded.  Term (c) includes the sum of $  G (z_t,\xi_{t+\tau}) -  G (z_t,\xi_{t})  $, which is might be large in Markov setting. We reformulate it into the sum of $  G (z_{t-\tau},\xi_{t }) -  G (z_t,\xi_{t}) $  and use the smooth Assumption \ref{assumptionsmooth} to bound it. Term (d) is similar to term (a) except that $ g(z_t) - G(z_t,\xi_t) $ is the gradient that used to update $ v_t $. We can bound it similarly with term (a). Term(e) is a constant that does not change much with $ T\to \infty $, and we can bound it directly through upper bound of each of its own terms. Finally, we combine all the upper bounds to each term,  use the mixing time Assumption \ref{assume mixing time} to choose $ \tau = \tau(\eta) $ and  obtain the error bound in Theorem 1.			
		
		We decompose Term(b) into a martingale part and an expectation part.By  constructing a martingale difference sequence and using the Azuma's inequality together with the Assumption \ref{assume mixing time}, we can bound Term (b) and finally obtain the high probability error bound.
	\end{proof}	
	\textbf{Remark:} (1) With $T\to\infty$, the error bound approaches $0$ in order $O(\frac{\sum_{t=1}^T \alpha_t^2}{\sum_{t=1}^T \alpha_t})$. (2) The mixing time $\tau(\eta)$ will influence the convergence rate. If the Markov process has better mixing property with smaller $\tau(\eta)$, the algorithm converge faster. (3) If the data are i.i.d. generated (the mixing time $ \tau(\eta) =0, \forall \eta$) and the step size is set to the constant $\frac{c}{L_1 \sqrt{T}} $, our  bound will reduce to  $  Err_\phi(\tilde{z}_1^T) 
	\le 		\frac{1}{\sum_{t=1}^{T}\alpha_t} \left[ A    + B \sum_{t=1}^{T}\alpha_t^2  \right]= \mathcal{O}(\frac{L_1 }{\sqrt{T}})$, which is identical to previous work with constant step size in i.i.d. setting (\cite{liu2015finite},\cite{nemirovski2009robust}). (4)	  The  high probability bound is similar to the expectation bound in the following Lemma \ref{expectationbound} except for the last term. This is because we consider the deviation of the data around its expectation  to derive the high probability bound.

	\begin{lemma}\label{expectationbound}
		
		Consider the convex-concave problem (2.5), under the same as Theorem \ref{highprobability}, we have
%		Suppose Assumption \ref{assumptioncompactness},\ref{assumptionstepsize},\ref{assumptionlipschitz},\ref{assumptionsmooth} hold, then for the gradient algorithm optimizing the convex-concave saddle point problem in (\ref{saddle point problem}),  $\forall \eta > 0 $ such that $ \tau(\eta)\le T/2$, we have 
		\small\begin{align*}
		\E_{\mathcal D} [Err_\phi(\tilde{z}_1^T)] 
		\le 		\frac{1}{\sum_{t=1}^{T}\alpha_t} \left[ A  +B \sum_{t=1}^{T}\alpha_t^2 + C\tau(\eta)\sum_{t=1}^{T}\alpha_t^2  + F\eta\sum_{t=1}^{T}\alpha_t + H\tau(\eta) \right], \forall \eta>0,
		\end{align*}\normalsize			
		
	\end{lemma}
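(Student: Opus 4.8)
The plan is to prove the expectation bound by following precisely the decomposition (\ref{keydecomposition}) that underlies Theorem \ref{highprobability}, but replacing the martingale/Azuma step for term (b) with a direct conditional-expectation estimate. First I would reduce the primal-dual gap to a single averaged sum: by convexity of $\phi$ in $x$, concavity in $y$, the definition (\ref{error function}) of $Err_\phi$, and the form of $\tilde z_1^T$, one obtains $Err_\phi(\tilde z_1^T)\le\frac{1}{\sum_t\alpha_t}\max_z\sum_t\alpha_t(z_t-z)^\top g(z_t)$, so it suffices to bound $\E_{\mathcal D}\big[\max_z\sum_t\alpha_t(z_t-z)^\top g(z_t)\big]$. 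I would then introduce the auxiliary sequence $\{v_t\}$, which is $\huaf_{t-1}$-measurable and driven by the noise gradient $\delta_t=g(z_t)-G(z_t,\xi_t)$, and split each summand into the four pieces (a)--(d) plus the tail (e) exactly as in (\ref{keydecomposition}). The decisive structural feature is that the $z$-dependence lives only in (a), (d), (e), while the genuinely hard bias terms (b) and (c) are $z$-free; this is what lets me push $\max_z$ through only the benign pieces.

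Next I would bound the pieces in expectation. For (a), nonexpansiveness of the projection applied to $z_{t+1}=\mathcal P_{\mathcal X}(z_t-\alpha_t G(z_t,\xi_t))$ gives $2\alpha_t(z_t-z)^\top G(z_t,\xi_t)\le\|z_t-z\|^2-\|z_{t+1}-z\|^2+\alpha_t^2\|G(z_t,\xi_t)\|^2$; the middle differences telescope, and Assumptions \ref{assumptioncompactness} and \ref{assumptionlipschitz} bound the remainder by $\tfrac12 D^2+\tfrac12 L_1^2\sum_t\alpha_t^2$. Term (d) is treated identically using the $v_t$-recursion and $\|\delta_t\|\le 2L_1$, contributing a further $\tfrac12 D^2+2L_1^2\sum_t\alpha_t^2$; together these yield the constants $A=D^2$ and $B=\tfrac52 L_1^2$. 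Term (e) has only $\tau$ summands, each bounded by $\alpha_0 D L_1$ via Assumptions \ref{assumptionstepsize}, \ref{assumptioncompactness}, \ref{assumptionlipschitz} (non-increasing step size, bounded domain, bounded gradient), giving the $H\tau(\eta)$ contribution.

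The two terms that make the Markov setting nontrivial are (b) and (c), and this is where I expect the main obstacle. For (b), since $z_t$ and $v_t$ are both $\huaf_t$-measurable while $\delta'_t=g(z_t)-G(z_t,\xi_{t+\tau})$ depends on the future sample $\xi_{t+\tau}$, I would condition on $\huaf_t$ and use $g(z_t)=\E_\pi[G(z_t,\xi)]$ to write $\E[\delta'_t\mid\huaf_t]=\int G(z_t,\xi)\,(\pi(\xi)-p_{[t]}^{t+\tau}(\xi))\,d\xi$; bounding $\|G\|\le L_1$ and invoking Definition \ref{def mixing time} with $\tau=\tau(\eta)$ gives $\|\E[\delta'_t\mid\huaf_t]\|$ of order $L_1\eta$, hence $\E[(z_t-v_t)^\top\delta'_t]$ of order $DL_1\eta$ and the term $F\eta\sum_t\alpha_t$. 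This is exactly where the expectation proof diverges from Theorem \ref{highprobability}: rather than absorbing the fluctuation of $\delta'_t$ into a martingale difference and applying Azuma, I take expectations outright, which is why the extra square-root term is absent here. For (c) the difficulty is that $\delta''_t=G(z_t,\xi_{t+\tau})-G(z_t,\xi_t)$ need not be small sample-by-sample, so I would reindex the sum to compare gradients at a common sample, as $G(z_{t-\tau},\xi_t)-G(z_t,\xi_t)$, and then apply the smoothness Assumption \ref{assumptionsmooth} together with $\|z_{t-\tau}-z_t\|\le L_1\sum_{i=t-\tau}^{t-1}\alpha_i$, which follows from the bounded Lipschitz gradients and nonexpansiveness of the projection. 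Controlling this window of step sizes through Assumption \ref{assumptionstepsize} produces the $C\tau(\eta)\sum_t\alpha_t^2$ scaling. Finally I would collect the five bounds, divide through by $\sum_t\alpha_t$, and fix $\tau=\tau(\eta)$ via Assumption \ref{assume mixing time} to recover the stated constants $A,B,C,F,H$.
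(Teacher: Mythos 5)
Your proposal is correct and follows essentially the same route as the paper's own proof: the same convexity--concavity reduction of the primal-dual gap, the same decomposition (\ref{keydecomposition}) with the auxiliary sequence $\{v_t\}$, the same telescoping/projection bounds for terms (a), (d), (e), the same reindexing-plus-smoothness treatment of term (c), and, decisively, the same conditional-expectation estimate for term (b), namely $\E[\delta'_t\mid\huaf_t]=\int G(z_t,\xi)\left(\pi(\xi)-p_{[t]}^{t+\tau}(\xi)\right)d\xi$ bounded through the mixing-time definition with $\tau=\tau(\eta)$, which is precisely the paper's lemma for term (b) and precisely where the paper also replaces the Azuma/martingale step of Theorem \ref{highprobability}. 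The constants $A,B,C,F,H$ then assemble exactly as in the paper, so there is no gap to report.
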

	
	\begin{proof}[Proof Sketch of Lemma \ref{expectationbound}]				
		We start from  the key decomposition (\ref{keydecomposition}), and bound each term with expectation this time. We can easily bound each term as previously except for Term (b). For term (b), since   $(z_t-v_t) $ is not related to $ \max  $ operator and it is measurable with respect to $ \huaf_{t-1} $, we can bound Term (b) through the definition of mixing time and finally obtain the expectation bound.
	\end{proof}

	\subsection{Finite Sample Bounds for GTD Algorithms}
	As a specific convex-concave saddle point problem,  the error bounds in Theorem 1\&2 can also provide the error bounds for GTD with the following specifications for the Lipschitz constants.
	
	\begin{proposition}\label{GTD bound prepare}
		Suppose Assumption \ref{assumptioncompactness}-\ref{assrl bound}  hold, then the objective function in GTD algorithms is Lipschitz and smooth with the following coefficients:			
			\begin{align*}
		L_1 &\le \sqrt{2}( 2D(1+\gamma)\rho_{max} L^2 d + \rho_{max}LR_{max} +\lambda_M) \\
		L_2 &\le \sqrt{2}(2(1+\gamma)\rho_{max} L^2 d + \lambda_M)
		\end{align*}
						where $\lambda_M$ is the largest singular value of $M$.
	\end{proposition}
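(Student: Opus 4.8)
The plan is to recognize the GTD objective as the specific instance $\Phi(x,y,\xi) = \langle \hat{b} - \hat{A}x, y\rangle - \frac{1}{2}\Vert y\Vert_{\hat M}^2$ of the general saddle function in (\ref{saddle point problem}), and to verify Assumptions \ref{assumptionlipschitz} and \ref{assumptionsmooth} directly by computing and bounding its partial gradients. First I would write down the two partial gradients with respect to $z=(x,y)$, namely $\nabla_x \Phi = -\hat{A}^\top y$ and $\nabla_y \Phi = \hat{b} - \hat{A}x - \hat{M}y$. The crucial structural observation is that both are \emph{affine} in $z$. Consequently, the Lipschitz constants of $\Phi$ (Assumption \ref{assumptionlipschitz}) are controlled by the sup-norms of these gradients over the bounded set $\mathcal{X}$, whereas the smoothness constants (Assumption \ref{assumptionsmooth}) are governed directly by the operator norms of the coefficient matrices $\hat{A}$ and $\hat{M}$, with no dependence on the diameter $D$.

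The key quantitative step is to bound the per-sample matrices using the bounded-data Assumption \ref{assrl bound}. Since $\hat{A} = \rho(s,a)\phi(s)(\phi(s)-\gamma\phi(s'))^\top$ is a scaled outer product, its operator norm factorizes as $\Vert\hat{A}\Vert \le \rho_{max}\Vert\phi(s)\Vert\,\Vert\phi(s)-\gamma\phi(s')\Vert$; bounding each feature vector coordinatewise by $L$ and using $\gamma<1$ gives $\Vert\hat{A}\Vert \le (1+\gamma)\rho_{max}L^2 d$, where the factor $d$ enters when the coordinatewise bound is converted to the Euclidean norm of a $d$-dimensional vector. Analogously $\Vert\hat{b}\Vert \le \rho_{max}L R_{max}$, and $\Vert\hat{M}\Vert \le \lambda_M$ is meant to cover both $M=I$ (GTD) and the empirical covariance $\hat{M}=\phi(s)\phi(s)^\top$ (GTD2). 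All of these hold uniformly for $\Pi$-almost every $\xi$, as required by the two assumptions.

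With these in hand I would assemble the constants. For the Lipschitz bound, the bounded-parameter Assumption \ref{assumptioncompactness} yields $\Vert x\Vert,\Vert y\Vert\le D$, so that $L_{1x}\le\Vert\hat{A}\Vert D$ and $L_{1y}\le\Vert\hat{b}\Vert+\Vert\hat{A}\Vert D+\lambda_M D$; for smoothness, reading the Lipschitz constants off the affine maps gives $L_{2x}\le\Vert\hat{A}\Vert$ and $L_{2y}\le\Vert\hat{A}\Vert+\lambda_M$. Substituting the three matrix bounds and applying the elementary inequality $\sqrt{a^2+b^2}\le a+b$ inside the definitions $L_1=\sqrt{2}\sqrt{L_{1x}^2+L_{1y}^2}$ and $L_2=\sqrt{2}\sqrt{L_{2x}^2+L_{2y}^2}$ reproduces exactly the two stated expressions, with the $2D(1+\gamma)\rho_{max}L^2 d$ term arising from the two occurrences of $\Vert\hat{A}\Vert D$.

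I expect the main obstacle to be bookkeeping rather than any conceptual difficulty. One must stay consistent about which norm the feature bound $L$ refers to (coordinatewise versus Euclidean), since this is precisely what dictates the appearance of the dimension factor $d$, and one must handle the matrix $\hat{M}$ uniformly across both variants: it is trivial for $M=I$, but for GTD2 it requires bounding the per-sample $\hat{C}=\phi(s)\phi(s)^\top$ via $\Vert\phi(s)\Vert^2$ and reconciling that per-sample bound with the largest singular value $\lambda_M$ written in the statement. The affine structure of the gradients is what keeps the argument clean, since it lets me read the smoothness constants directly off $\hat{A}$ and $\hat{M}$ without any second differentiation or appeal to convexity.
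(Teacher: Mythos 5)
Your proposal is correct and takes essentially the same route as the paper's own proof: it identifies the stochastic gradient $G(x,y,\xi)=\bigl(-\hat{A}^\top y,\,-(\hat{b}-\hat{A}x-\hat{M}y)\bigr)$, bounds $L_1$ by the norm of this (affine) gradient over the diameter-$D$ parameter set and $L_2$ by the operator norms of its coefficient blocks, and then substitutes exactly the data bounds $\Vert\hat{A}_t\Vert\le(1+\gamma)\rho_{max}L^2d$ and $\Vert\hat{b}_t\Vert\le\rho_{max}LR_{max}$ from Lemma 2 of \cite{liu2015finite}, just as the paper does by combining its supplementary lemmas. The one discrepancy is cosmetic and lies in the paper, not in your argument: your $L_{1y}\le\Vert\hat{b}\Vert+\Vert\hat{A}\Vert D+\lambda_M D$ yields the term $\lambda_M D$, which matches the supplementary bound $L_1^2\le 2\bigl(2\Vert A\Vert D+\Vert b\Vert+\lambda_M D\bigr)^2$ rather than the bare $\lambda_M$ in the main-text statement (evidently a dropped factor $D$ there), and your closing remark about reconciling the per-sample $\Vert\hat{M}_t\Vert$ (e.g.\ $\Vert\phi(s)\phi(s)^\top\Vert\le L^2 d$ for GTD2) with the stated $\lambda_M$ flags a looseness the paper's proof silently glosses over.
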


	\begin{theorem}\label{gtdbound} 
		Suppose assumptions 1-4 hold, then we have the following finite sample bounds for the   error $ \Vert V- \tilde{v}_1^T \Vert_\pi $ in GTD algorithms: In on-policy case, the  bound  in expectation is \small$\mathcal{O} \left(\frac{L \sqrt{L^4 d^3 \lambda_M\pi_{max} (1+\tau(\eta))\pi_{max}o_1(T)}}{\nu_C}\right)$\normalsize and with probability $ 1-\delta $ is \small$\mathcal{O}\left(\frac{\sqrt{  L^4 d^2 \lambda_M\pi_{max} }}{\nu_C} \left( \sqrt{(1+\tau(\eta))L^2 d o_1(T) + \sqrt{\tau(\eta)\log\left(\frac{\tau(\eta)}{\delta} \right)}o_2(T)} \right) \right)$\normalsize;		
		In off-policy case, the  bound in expectation is \small$\mathcal{O} \left(\frac{L^2d\sqrt{2\lambda_C\lambda_M\pi_{max} (1+\tau(\eta))o_1(T)}}{\nu_{(A^TM^{-1}A)}}\right)$\normalsize and with probability $ 1-\delta $ is \small$\mathcal{O} \left(\frac{\sqrt{2\lambda_C\lambda_M\pi_{max} }}{\nu_{(A^TM^{-1}A)}}  \left( \sqrt{L^4d^2(1+\tau(\eta))o_1(T) + \sqrt{\tau(\eta)\log{(\frac{\tau(\eta)}{\delta})}}o_2(T) } \right) \right)$\normalsize, 
		where $ \nu_C, \nu_{(A^TM^{-1}A)}$ is the smallest eigenvalue of the $C$ and $ A^TM^{-1}A $ respectively, $\lambda_C$ is the largest singular value of $C$, $ o_1(T) =   (\frac{\sum_{t=1}^{T}\alpha_t^2}{\sum_{t=1}^{T}\alpha_t}), o_2(T) =   (\frac{\sqrt{\sum_{t=1}^{T}\alpha_t^2}}{\sum_{t=1}^{T}\alpha_t}) $. 		
		
	\end{theorem}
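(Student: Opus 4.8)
The plan is to treat the GTD and GTD2 updates as a concrete instance of the stochastic saddle point scheme analyzed in Theorem \ref{highprobability} and Lemma \ref{expectationbound}, and then to translate the resulting primal-dual gap bound into a bound on the weighted value-error $\|V - \tilde{v}_1^T\|_\pi$. Recall from Eqn (\ref{gtd minmax form}) that the GTD objective is exactly $L(x,y) = \langle b - Ax, y\rangle - \frac{1}{2}\|y\|_M^2$ with $x = \theta$ and $y$ the auxiliary variable, so the iterates in \textbf{Algorithm 1} are precisely the saddle point iterates $z_t = (x_t, y_t)$. Hence Theorem \ref{highprobability} and Lemma \ref{expectationbound} apply verbatim, provided we supply the Lipschitz and smoothness constants $L_1, L_2$ of the GTD objective. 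These are furnished by Proposition \ref{GTD bound prepare}, which under the bounded-data Assumption \ref{assrl bound} gives $L_1 \le \sqrt{2}(2D(1+\gamma)\rho_{max}L^2 d + \rho_{max}L R_{max} + \lambda_M)$ and $L_2 \le \sqrt{2}(2(1+\gamma)\rho_{max}L^2 d + \lambda_M)$. Substituting these into the constants $A, B, C, F, H$ of Theorem \ref{highprobability} and collecting the dominant terms as $T \to \infty$ yields a primal-dual gap bound of the form $Err_\phi(\tilde{z}_1^T) = \mathcal{O}\big(L_1^2(1+\tau(\eta))\, o_1(T) + D L_1 \sqrt{\tau(\eta)\log(\tau(\eta)/\delta)}\, o_2(T)\big)$ in the high-probability case, and the same without the second term in expectation.

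The second step converts this gap into the value-error. Following \cite{liu2015finite}, I would first observe that maximizing $L(\tilde{x}_1^T, y)$ over $y$ gives $\max_y L(\tilde{x}_1^T, y) = \frac{1}{2}\|b - A\tilde{x}_1^T\|_{M^{-1}}^2 = \frac{1}{2}J(\tilde{x}_1^T)$, while at the saddle point $(x^*, y^*)$ one has $Ax^* = b$ and hence $L(x^*,\tilde{y}_1^T) \le 0$; therefore $\min_x L(x,\tilde{y}_1^T) \le 0$ and the primal-dual gap dominates $\frac{1}{2}J(\tilde{x}_1^T)$, i.e. $\|b - A\tilde{x}_1^T\|_{M^{-1}}^2 \le 2\, Err_\phi(\tilde{z}_1^T)$. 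It then remains to relate the residual $\|b - A\tilde{x}_1^T\|_{M^{-1}}$ to the weighted value-error. Using $b - A\theta = A(\theta^* - \theta)$ together with the eigenvalue bounds on the relevant matrices, one obtains $\|V - \tilde{v}_1^T\|_\pi \le \kappa \sqrt{J(\tilde{x}_1^T)} \le \kappa\sqrt{2\, Err_\phi(\tilde{z}_1^T)}$, where the conversion factor $\kappa$ differs by regime: in the on-policy case $M = C$ and $\kappa$ scales like $\sqrt{\lambda_M \pi_{max}}/\nu_C$, whereas in the off-policy case $\kappa$ scales like $\sqrt{\lambda_C \lambda_M \pi_{max}}/\nu_{(A^T M^{-1}A)}$. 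Plugging the gap bound from the first step into this inequality, pulling the $L^4 d^2$-type factors out of the square root, and simplifying gives exactly the four stated bounds.

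The bookkeeping of constants is routine, but I expect the genuine obstacle to be the second step: establishing the conversion factor $\kappa$ rigorously in each regime. The subtlety is that the error $\|V - \tilde{v}_1^T\|_\pi$ must be controlled through the residual in the $M^{-1}$-norm rather than the Euclidean norm, so the bound necessarily involves both the smallest eigenvalue $\nu_C$ (respectively $\nu_{(A^TM^{-1}A)}$) in the denominator and the largest singular values $\lambda_M, \lambda_C$ together with the stationary weight $\pi_{max}$ in the numerator. In the off-policy case $A$ is no longer symmetric and $M \ne C$, so the passage from $\|b - A\theta\|_{M^{-1}}$ to a bound on $\|\theta - \theta^*\|$ must go through $A^T M^{-1} A$, which is why $\nu_{(A^T M^{-1}A)}$ rather than $\nu_C$ controls the off-policy factor; care is needed to ensure these quantities are well defined, which is exactly where Assumption \ref{assrl nonsingular} on the non-singularity of $A$ and $C$ enters. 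Once $\kappa$ is pinned down in both regimes, combining it with the primal-dual gap bounds of Theorem \ref{highprobability} and Lemma \ref{expectationbound} is mechanical.
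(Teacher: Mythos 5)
Your proposal takes essentially the same route as the paper's own proof: instantiate Theorem \ref{highprobability} and Lemma \ref{expectationbound} with the GTD-specific Lipschitz and smoothness constants of Proposition \ref{GTD bound prepare} (obtained from the bounded-data assumption via the norm bounds on $\hat{A}_t,\hat{b}_t$), then convert the primal-dual gap into the value error $\Vert V-\tilde{v}_1^T\Vert_\pi$ using Propositions 4--5 of \cite{liu2015finite}, which supply exactly the on-policy factor $\frac{L}{\nu_C}\sqrt{2d\lambda_M\pi_{max}}$ and off-policy factor $\sqrt{2\lambda_C\lambda_M\pi_{max}/\nu_{A^\top M^{-1}A}}$ you identify. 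The only cosmetic difference is that the paper cites those conversion propositions outright, whereas you correctly sketch their derivation (the gap dominating $\tfrac12\Vert b-A\tilde{x}_1^T\Vert^2_{M^{-1}}$, with Assumption \ref{assrl nonsingular} guaranteeing $Ax^*=b$ is solvable), so there is no substantive gap.
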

	
	We would like to make the following discussions for   Theorem \ref{gtdbound}.	
	
	\textbf{The GTD algorithms do converge in the realistic Markov setting. }As  in Theorem \ref{gtdbound}, the  bound  in expectation is   $ \mathcal{O}\left( \sqrt{(1+\tau(\eta))o_1(T)} \right) $ and with   probability $ 1-\delta $ is \small $ \mathcal{O}\left( \sqrt{(1+\tau(\eta))o_1(T) + \sqrt{\tau(\eta)\log(\frac{\tau(\eta)}{\delta}) } o_2(T)}   \right)$\normalsize. If the step size $\alpha_t$ makes $o_1(T)\to 0$ and $o_2(T)\to 0$, as $T\to \infty$, the GTD algorithms will converge. Additionally, in high probability bound, if $ \sum_{t=1}^{T}\alpha_t^2 > 1 $, then $ o_1(T) $   dominates the order, if $ \sum_{t=1}^{T}\alpha_t^2 < 1 $, $o_2(T)$ dominates.
	
	\textbf{The setup of the step size can be flexible.} Our finite sample bounds for GTD algorithms converge to $0$ if the step size satisfies $\sum_{t=1}^{T}\alpha_t \to \infty, \frac{\sum_{t=1}^{T}\alpha_t^2}{\sum_{t=1}^{T}\alpha_t} <\infty $, as $T\to \infty$. This condition on step size is much weaker than the constant step size in previous work \cite{liu2015finite}, and the common-used step size $\alpha_t =  \mathcal{O}(\frac{1}{\sqrt{t}}), \alpha_t =\mathcal{O}(\frac{1}{t}),\alpha_t =c = \mathcal{O}(\frac{1}{\sqrt{T}} ) $  all satisfy the condition. To be specific, for $ \alpha_t = \mathcal{O}(\frac{1}{\sqrt{t}})  $, the convergence rate is $\mathcal{O}(\frac{\ln(T)}{\sqrt{T}})$; for $ \alpha_t = \mathcal{O}(\frac{1}{t}) $, the convergence rate is $ \mathcal{O}(\frac{1}{\ln(T)})$, for the constant step size, the optimal setup is $ \alpha_t =\mathcal{O}( \frac{1}{\sqrt{T}} ) $ considering the trade off between  $ o_1(T) $ and $ o_2(T) $, and the convergence rate is $ \mathcal{O}(\frac{1}{\sqrt{T}}) $.

	\textbf{The mixing time matters. } If the data are generated from a Markov process with smaller mixing time, the error bound will be smaller, and we just need fewer samples to achieve a fixed estimation error. This finding can explain why the experience replay trick (\cite{lin1993reinforcement}) works. With experience replay, we store the agent’s experiences (or data samples) at each step, and randomly sample one from the pool of stored samples to update the policy function.  By Theorem 1.19 - 1.23 of \cite{durrett2016poisson}, it can be proved that, for arbitrary $\eta>0$, there exists $t_0$, such that $ \forall t>t_0 \max_i |\frac{N_t(i)}{t} - \pi(i)| \le \eta $. That is to say, when the size of the stored samples is larger than $t_0$, the mixing time of the new data process with experience replay is $0$. Thus, the experience replay trick improves the mixing property of the data process, and hence improves the convergence rate.

	\textbf{Other factors that influence the finite sample bound:}
	(1) With the increasing of the feature norm $ L $, the finite sample bound increase. This is consistent with the empirical finding by \cite{dann2014policy} that the normalization of features is crucial for the estimation quality of GTD algorithms. (2) With the increasing of the feature dimension $ d $, the bound increase.  Intuitively, we need more samples for a linear approximation in a higher dimension feature space. %\footnote{Actually, on the other hand, the approximation error will decrease, and there is a trade-off between estimation and approximation. This is out of the scope of this paper. } 

	\section{Experiments}
	In this section, we report our simulation results to validate our theoretical findings. We consider the general convex-concave saddle problem, 
	\small\begin{equation} \label{simulation}
	\min_x\max_y\left( L(x,y) = \langle b-Ax,y \rangle + \frac{1}{2}\Vert x\Vert^2 - \frac{1}{2}\Vert y\Vert^2 \right)
	\end{equation}\normalsize
	where $ A $ is a $ n\times n $ matrix, b is a $ n \times 1 $ vector, Here we set $ n=10 $.	We conduct three experiment and set the step size to $ \alpha_t = c = 0.001 $, $\alpha_t = \mathcal{O} (\frac{1}{\sqrt{t}}) = \frac{0.015}{\sqrt{t}} $and $\alpha_t =  \mathcal{O}(\frac{1}{t} )= \frac{0.03}{t} $ respectively. In each experiment	we sample the data $ \hat{A}, \hat{b} $ three ways: sample from  two  Markov chains with different mixing time but share the same stationary distribution or sample  from  stationary distribution  i.i.d. directly.  We sample $ \hat{A} \text{ and } \hat{b}$ from Markov chain by using MCMC Metropolis-Hastings algorithms. Specifically, notice that the mixing time of a Markov chain is positive correlation with the second largest eigenvalue of its transition probability matrix  (\cite{levin2009markov}), we firstly conduct two transition probability matrix with different second largest eigenvalues( both with 1001 state and the second largest eigenvalue are 0.634  and 0.31   respectively), then using Metropolis-Hastings algorithms construct two Markov chain with same stationary distribution.	
	
	We run the gradient algorithm for the objective in (\ref{simulation}) based on the simulation data, without and with experience replay trick. The primal-dual gap error curves are plotted in Figure 1.
	
	We have the following observations. (1) The error curves converge in Markov setting with all the three setups of the step size. (2) The error curves with the data generated from the process which has small mixing time converge faster. The error curve for i.i.d. generated data converge fastest. (3) The error curve for different step size convergence at different rate. (4) With experience replay trick, the error curves in the Markov settings converge faster than previously.  All these observations are consistent with our theoretical findings.

	\begin{figure}[ht]
		\centering
		\subfigure[$ \alpha_t  = c$]{
			\label{figa}
			\includegraphics[width=1.5in,height=1.1in]{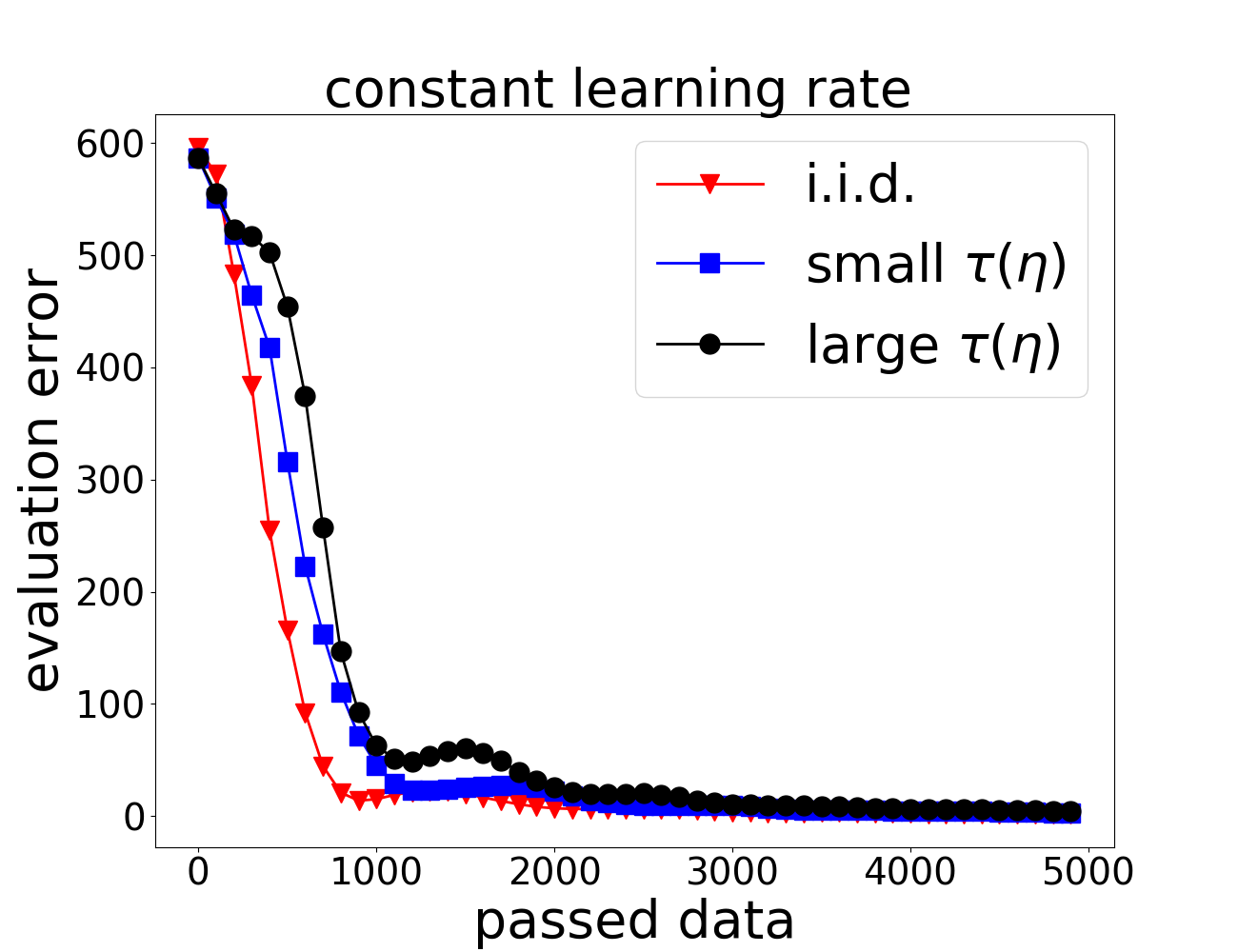}}
		\subfigure[$ \alpha_t =\mathcal{O} (\frac{1}{\sqrt{t}}) $]{
			\label{figc}
			\includegraphics[width=1.5in,height=1.1in]{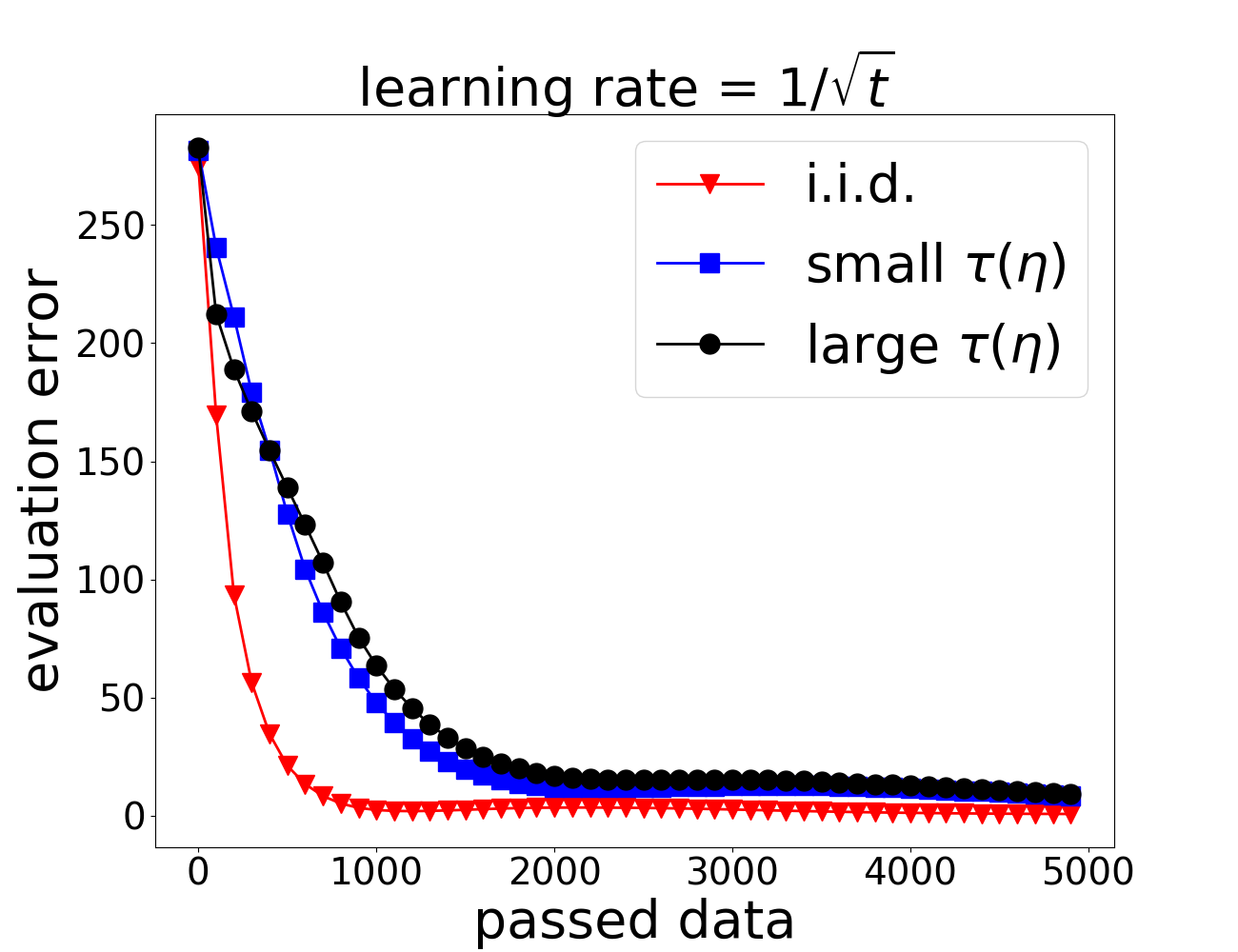}} 
		\subfigure[$ \alpha_t = \mathcal{O} {(\frac{1}{t})}$ ]{
			\label{fige}
			\includegraphics[width=1.5in,height=1.1in]{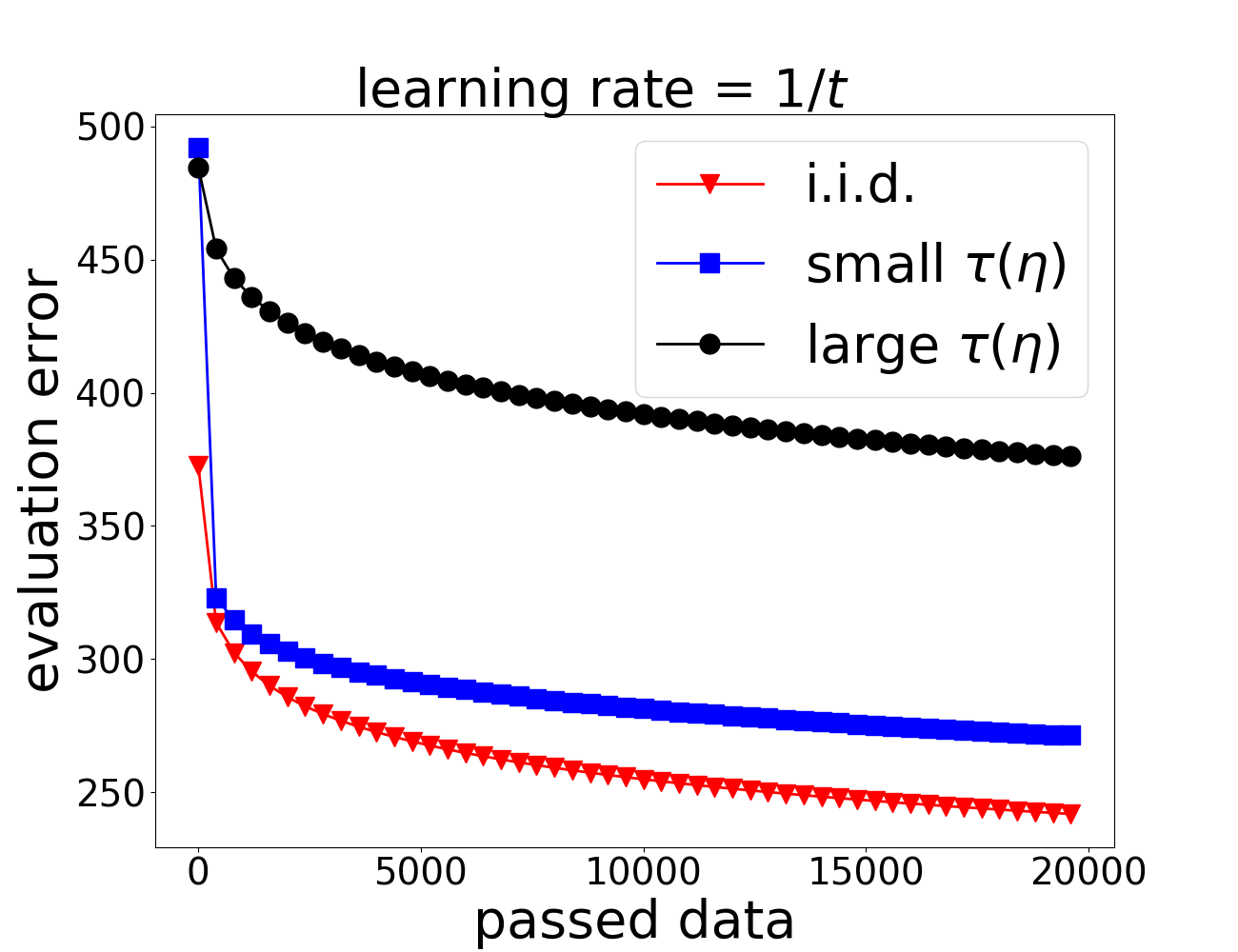}}	
		\subfigure[$ \alpha_t  = c$ with trick  ]{
			\label{figb}
			\includegraphics[width=1.5in,height=1.1in]{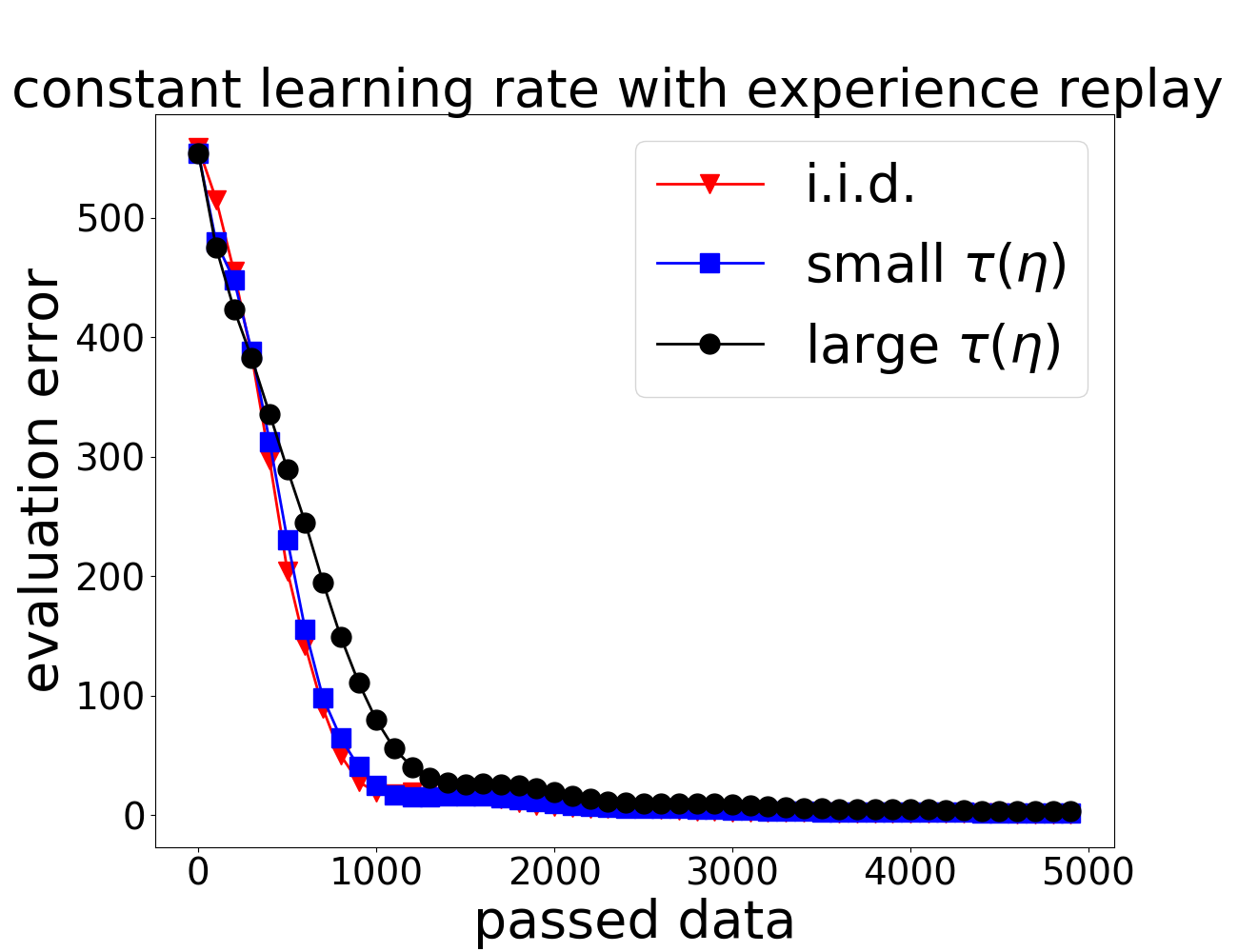}}
		\label{Fig1}	
		\subfigure[$ \alpha_t =\mathcal{O}( \frac{1}{\sqrt{t}})$ with trick ]{
			\label{figd}
			\includegraphics[width=1.5in,height=1.1in]{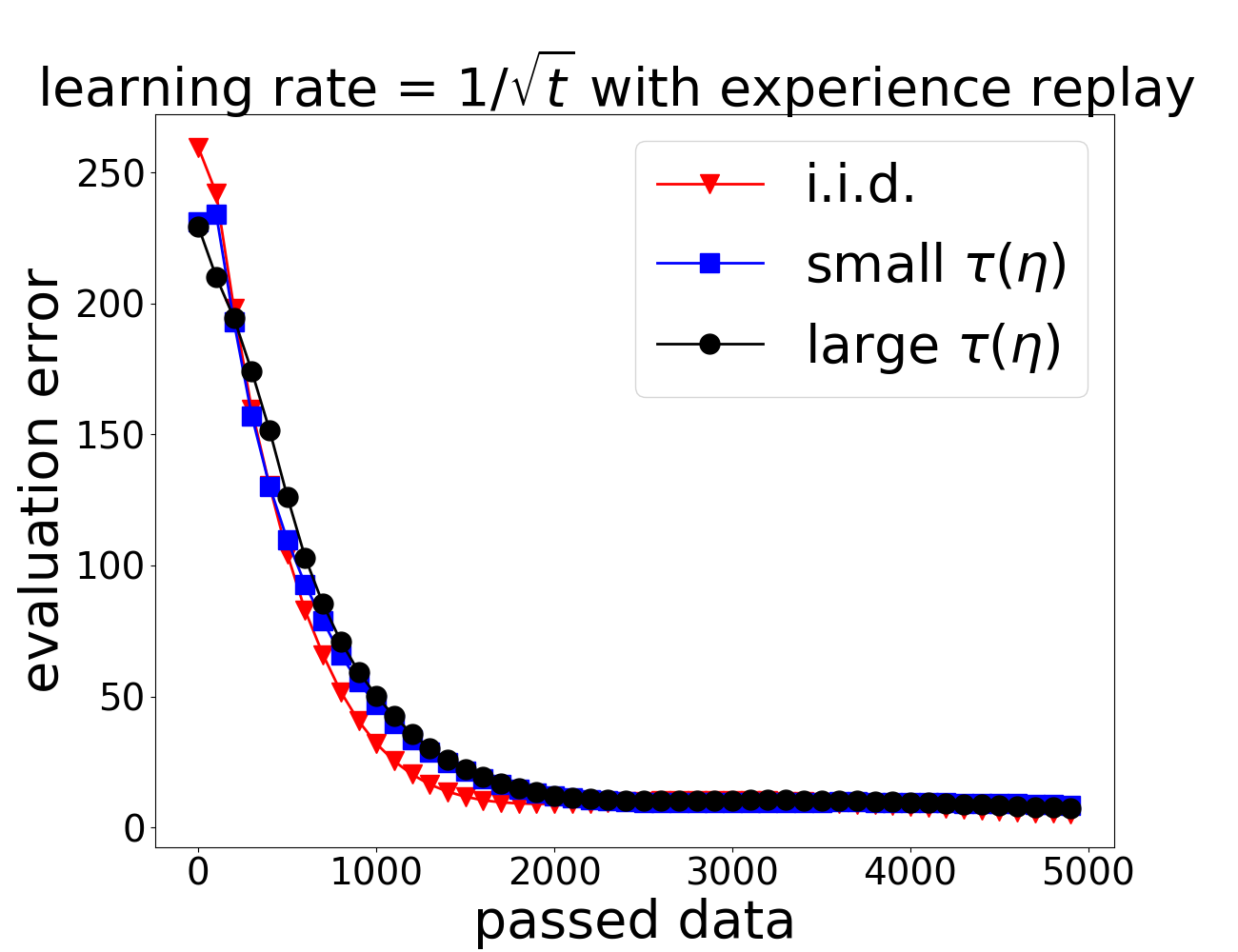}}	
		\subfigure[$ \alpha_t =\mathcal{O} {(\frac{1}{t})}$ with trick  ]{
			\label{figf}
			\includegraphics[width=1.5in,height=1.1in]{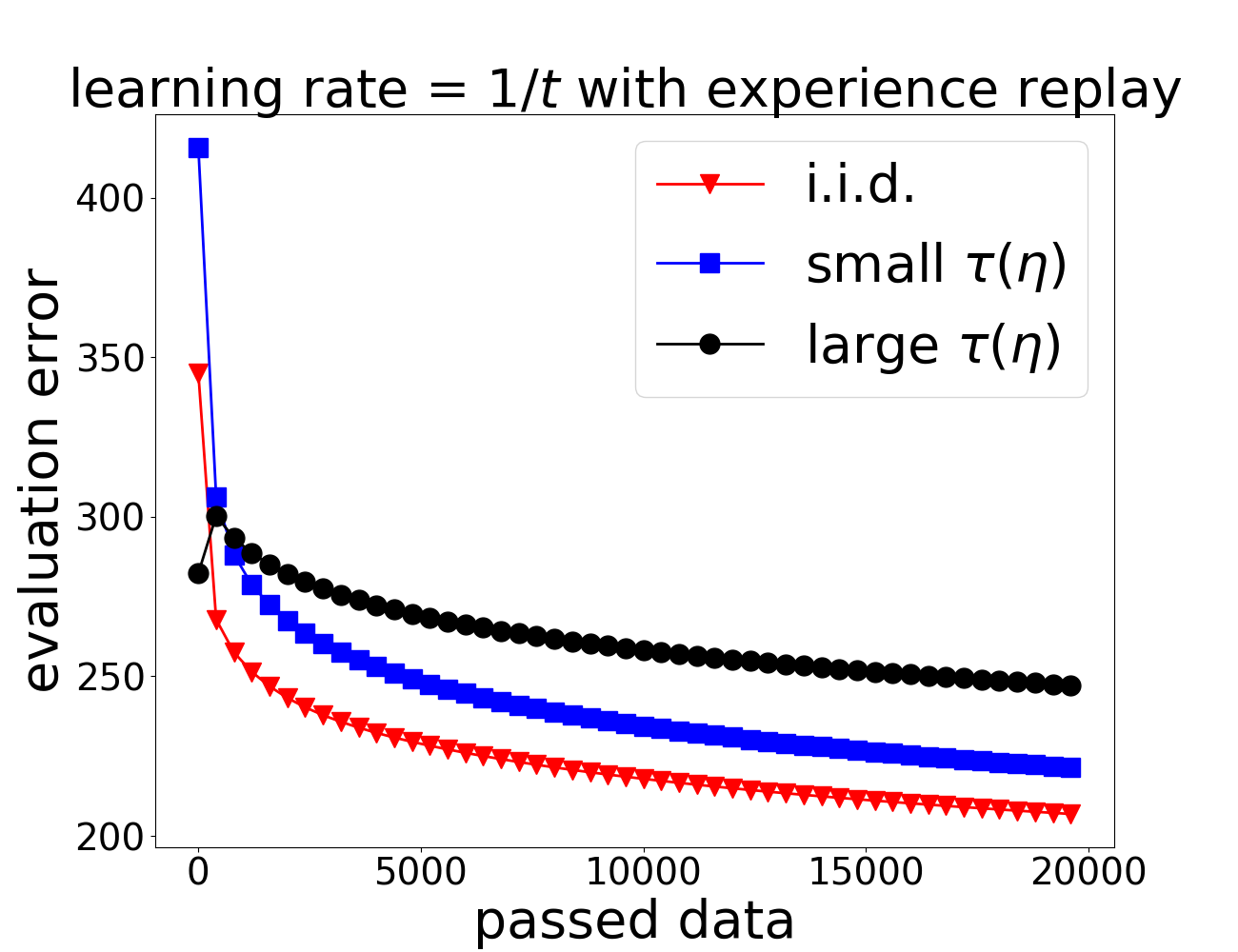}}		
		\caption{Experimental Results  }		
	\end{figure}	
	\section{Conclusion}
	In this paper, in the more realistic Markov setting, we proved the finite sample bound for the convex-concave saddle problems   with high probability and  in expectation. Then, we obtain the finite sample bound for GTD algorithms both in on-policy and off-policy, considering that the GTD algorithms are specific convex-concave saddle point problems. Our finite sample bounds provide important theoretical guarantee to the GTD algorithms, and also insights to improve them, including how to setup the step size and we need to improve the mixing property of the data like experience replay. In the future, we will study the finite sample bounds for policy evaluation with nonlinear function approximation.
	\section*{Acknowledgment}
	This work was supported by A Foundation for the Author of National Excellent Doctoral Dissertation of RP China (FANEDD 201312) and National Center for Mathematics and Interdisciplinary Sciences of CAS.

	\newpage
	\section{Supplementary}

This supplementary material  gives  the detail proof of main theorems in the main paper. The supplementary material is organized as follows:

Section \ref{s sec assumption} states some assumptions that corresponding to the main paper.
Section \ref{s sec proof} contains some lemmas and  the detail proofs of the  main Theorems in the main paper.
Section \ref{s sec proof of lemmas} contains detail proofs of lemmas used in proving the theorems.

%	\section{Notation}

\subsection{Assumptions}\label{s sec assumption}
\begin{assumption}[Bounded parameter space]\label{s assumptioncompactness}
	We assume there are finite  $ D< \infty $ such that	
	%		$$ \Vert x-x'\Vert \le D_x ~~for~ x,x'  \in \mathcal{X}_x  $$	
	%			$$ \Vert y-y'\Vert \le D_y ~~for~ y,y'  \in  \mathcal{X}_y $$	
	$$ \Vert z-z'\Vert \le D ~~for~ z,z'  \in \mathcal{X}  .$$
\end{assumption}

\begin{assumption}[Step size]\label{s assumptionstepsize}
	Let $ \lbrace\alpha_t \rbrace $ denote step size sequence which is non-increasing and for  $  \forall t ~~~~~ 0<\alpha_t \le \alpha_0 \le \infty $.
\end{assumption}

\begin{assumption}[Problem solvable]\label{s assrl nonsingular}
	The matrix $ A  $ and $ C $ are non-singular.
\end{assumption}

\begin{assumption}[Bounded data]\label{s assrl bound}
	The max norm of features are bounded  by $ L $, rewards are bounded by $ R_{max} $  and importance weights are bounded by $ \rho_{max} $.
\end{assumption}

\begin{assumption}[Lipschitz]\label{s assumptionlipschitz}
	For $ \Pi $-almost every $ \xi $ , the function $ \Phi(x,y,\xi) $ is Lipschitz for both x and y, that is there exists three constant $0 < L_{1x} <\infty $,$ 0 <  L_{1y}  <\infty $, $ 0 <  L_{1}  <\infty $ such that:
	$$ |\Phi(x',y,\xi) - \Phi(x,y,\xi)| \le L_{1x } \Vert x-x' \Vert~~ for ~~\forall x,x'\in \mathcal{X}_x$$
	$$ |\Phi(x,y,\xi) - \Phi(x,y',\xi)| \le L_{1y } \Vert y-y' \Vert~~ for ~~\forall y,y'\in \mathcal{X}_y$$
	
	and let $ L_{1  } \triangleq \sqrt{2}  \sqrt{L_{1x } ^2 + L_{1y }^2}$, we have 
	$$ |\Phi(z,\xi) - \Phi(z',\xi)| \le  L_1 \Vert z-z' \Vert~~ for ~~\forall z,z'\in \mathcal{X}_x \times \mathcal{X}_y.$$
	
	%		and we denote $ G \triangleq  \max(G_x,G_y)$, \\
	%		then it is easy to show, for $ \Pi $-almost every $ \xi $ ,  $\forall z,z' \in \mathcal{X} $$, z'=(x',y') $,$ z'=(x,y) $, :
	%		$$ |\Phi(x',y,\xi) - \Phi(x,y',\xi)| \le L \Vert z-z' \Vert$$
\end{assumption}

\begin{assumption}[Smooth]\label{s assumptionsmooth}
	For $ \Pi $-almost every $ \xi $ , the partial gradient function  of $ \Phi(x,y,\xi) $ is Lipschitz,  that is there exists three constant $0 < L_{2x} <\infty $,$ 0 <  L_{2y}  <\infty $, $ 0 <  L_{2}  <\infty $ such that:
	$$ \Vert G_x(x',y,\xi) - G_x(x,y,\xi)\Vert \le L_{2x } \Vert x-x' \Vert~~ for ~~\forall x,x'\in \mathcal{X}_x,\forall y \in \mathcal{X}_y$$
	$$ \Vert G_y(x,y,\xi) - G_y(x,y',\xi)\Vert \le L_{2y } \Vert y-y' \Vert~~ for ~~\forall y,y'\in \mathcal{X}_y,\forall x \in \mathcal{X}_x$$
	Then, let $  L_{2 } \triangleq  \sqrt{2}\sqrt{L_{2x }^2+L_{2y }^2}$, we have
	$$ \Vert G (z,\xi) - G (z',\xi)\Vert \le  L_{2} \Vert z-z' \Vert~~ for ~~\forall z,z'\in \mathcal{X}_x \times \mathcal{X}_y.$$
\end{assumption}

\begin{assumption}\label{s assume mixing time}
	The mixing times of the stochastic process $ \lbrace\xi_t \rbrace$  are uniform in the sense that there exist  uniform mixing times $\tau(P,\eta)$ such that with probability 1 for all $ \eta >0 $ and $ s \in \mathbb{N} $
	\begin{equation*}
	\tau(P_{[s]},\eta)\le \tau(P, \eta).
	\end{equation*}
\end{assumption}

%	\section{Lemmas}

\subsection{Proofs}\label{s sec proof}
\subsubsection{Proof of Theorem 1}

\begin{theorem}\label{s highprobability}
	
	Suppose Assumption \ref{s assumptioncompactness}, \ref{s assumptionstepsize},\ref{s assumptionlipschitz},\ref{s assumptionsmooth} hold. For $\forall \delta>0$, and $\forall \eta > 0 $ such that $ \tau(\eta)\le T/2$, with probability at least $ 1-\delta$, we have
	\small
	\begin{align*}
	Err_\phi(\tilde{z}_1^T)  
	\le 		\frac{1}{\sum\limits_{t=1}^{T}\alpha_t} \left[ A  +B \sum_{t=1}^{T}\alpha_t^2 + C\tau(\eta)\sum_{t=1}^{T}\alpha_t^2  + F\eta\sum_{t=1}^{T}\alpha_t + H\tau(\eta) +  8DL_1 \sqrt{2\tau(\eta) \log\frac{\tau(\eta)}{\delta} \left(\sum_{t=1}^{T}\alpha_t^2 + \tau(\eta)\alpha_0 \right)} \right].
	\end{align*}
	where 
	\begin{small}
		\begin{align*}
		A = D^2   \qquad
		B = \frac{5}{2} L_1^2 \qquad
		C = 6L_1^2 + 2L_1L_2D \qquad
		F = 2L_1D  \qquad
		H = 6L_1D\alpha_0  .
		\end{align*} 
		
	\end{small}	
	
\end{theorem}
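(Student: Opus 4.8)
The plan is to first reduce the primal--dual gap to a weighted average of gradient inner products. By the convexity of $\phi$ in $x$ and concavity in $y$, together with the definition of $\tilde z_1^T$ as the step-size-weighted average of the iterates, a standard argument gives
$$Err_\phi(\tilde z_1^T) \le \max_{z\in\mathcal X}\frac{1}{\sum_{t=1}^T\alpha_t}\sum_{t=1}^T\alpha_t(z_t-z)^\top g(z_t).$$
The difficulty is that in the Markov setting the sampled gradients $G(z_t,\xi_t)$ are biased estimates of $g(z_t)$, so I cannot directly invoke the i.i.d. saddle-point analysis. To handle this I would introduce a time shift $\tau$ and an auxiliary sequence $\{v_t\}$ that is predictable (measurable with respect to $\huaf_{t-1}$) and driven by the noise $\delta_t=g(z_t)-G(z_t,\xi_t)$, then apply the decomposition (\ref{keydecomposition}). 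The algebraic identity $\delta_t=\delta_t'+\delta_t''$ and the splitting $z_t-z=(z_t-v_t)+(v_t-z)$ are exactly what make this decomposition an equality; the purpose of isolating $(z_t-v_t)$ is that it is $\huaf_{t-1}$-measurable and free of the outer $\max_z$, which is essential for the probabilistic control later.

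Next I would bound the four interior terms plus the boundary term. For term (a) I would use nonexpansiveness of the projection in $z_{t+1}=\mathcal P_{\mathcal X}(z_t-\alpha_t G(z_t,\xi_t))$ to obtain the one-step inequality $\alpha_t(z_t-z)^\top G(z_t,\xi_t)\le\tfrac12(\Vert z_t-z\Vert^2-\Vert z_{t+1}-z\Vert^2)+\tfrac{\alpha_t^2}{2}\Vert G(z_t,\xi_t)\Vert^2$, then telescope (the $\max_z$ enters only through $\Vert z_1-z\Vert^2\le D^2$) and bound the gradient norm by $L_1$ via Assumption \ref{assumptionlipschitz}; this produces the $A=D^2$ and $B$ contributions. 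Term (d) is analogous, since $v_t$ is updated by precisely the gradient $\delta_t$, whose norm is at most $\sqrt2 L_1$. Term (c) is the genuinely Markovian piece: $\delta_t''=G(z_t,\xi_{t+\tau})-G(z_t,\xi_t)$ may be large, so I would reindex the sum to compare $G(z_{t-\tau},\xi_t)$ with $G(z_t,\xi_t)$ and invoke the smoothness Assumption \ref{assumptionsmooth}, bounding $\Vert z_{t-\tau}-z_t\Vert$ by the accumulated step lengths over a window of length $\tau$; this yields the $C\tau(\eta)\sum_t\alpha_t^2$ term. The boundary term (e) has only $\tau$ summands, each bounded by $\alpha_0 D L_1$ up to constants, giving the $H\tau(\eta)$ term.

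The crux is term (b), namely $\sum_t\alpha_t(z_t-v_t)^\top\delta_t'$ with $\delta_t'=g(z_t)-G(z_t,\xi_{t+\tau})$. I would split it as $(z_t-v_t)^\top\delta_t'=\E[(z_t-v_t)^\top\delta_t'\mid\huaf_t]+\big((z_t-v_t)^\top\delta_t'-\E[(z_t-v_t)^\top\delta_t'\mid\huaf_t]\big)$. For the conditional-expectation part, because $(z_t-v_t)$ is $\huaf_t$-measurable and $\xi_{t+\tau}$ lies $\tau$ steps ahead of $\huaf_t$, the mixing-time Definition \ref{def mixing time} with $\tau=\tau(\eta)$ gives $\Vert g(z_t)-\E[G(z_t,\xi_{t+\tau})\mid\huaf_t]\Vert\le \eta L_1$, producing the $F\eta\sum_t\alpha_t$ term; this is exactly the route that also proves the in-expectation Lemma \ref{expectationbound}. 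For the remaining fluctuation I would build a martingale difference sequence and apply Azuma's inequality. Because the $\tau$-shift makes the naive increments non-adapted to $\{\huaf_t\}$, I would partition $\{1,\dots,T-\tau\}$ into $\tau(\eta)$ interleaved subsequences spaced $\tau(\eta)$ apart, each adapted to a coarsened filtration so that its increments are genuine martingale differences bounded in magnitude by $2DL_1$, apply Azuma per subsequence, and combine by a union bound (hence the $\log(\tau(\eta)/\delta)$) together with Cauchy--Schwarz; this generates the final term $8DL_1\sqrt{2\tau(\eta)\log(\tau(\eta)/\delta)(\sum_t\alpha_t^2+\tau(\eta)\alpha_0)}$.

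I expect the main obstacle to be the rigorous handling of term (b): the $\tau$-shift means the increments are not adapted to the natural filtration, so the careful interleaving into $\tau(\eta)$ predictable subsequences, the verification that $v_t$ and $(z_t-v_t)$ are $\huaf_{t-1}$-measurable, and the uniform control of the Azuma increment and its variance proxy are the delicate steps. Once the decomposition is in place the other terms reduce to telescoping and Lipschitz/smoothness estimates; assembling all of them and choosing $\tau=\tau(\eta)$ via Assumption \ref{assume mixing time} yields the stated constants $A,B,C,F,H$.
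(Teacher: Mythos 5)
Your proposal is correct and follows essentially the same route as the paper's own proof: the same convexity--concavity reduction of the primal--dual gap, the same auxiliary predictable sequence $v_t$ and five-term decomposition, telescoping plus Lipschitz bounds for terms (a) and (d), the reindexing-plus-smoothness argument for term (c), and for term (b) the same split into a mixing-time conditional-expectation part (giving $F\eta\sum_t\alpha_t$) and a fluctuation part handled by $\tau(\eta)$ interleaved martingale difference sequences with Azuma's inequality and a union bound. Only minor bookkeeping differs from the paper (e.g.\ $\Vert\delta_t\Vert\le 2L_1$ rather than $\sqrt{2}L_1$, the Azuma increments must carry the step-size factor $\alpha_t$, and the bulk of the $H\tau(\eta)$ term actually arises from the boundary pieces of term (c) rather than from term (e) alone), none of which affects the structure or validity of the argument.
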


\begin{proof}[\textbf{Proof of Theorem \ref{s highprobability}}]
	For the convenience of proof, we introduce a sequence of  auxiliary variables $ \lbrace v _t \rbrace_{t=1,\dots,T-\tau}  $ that follow the iteration formula below respectively:
	%		 , $ \lbrace v^y_t \rbrace_{t=1,\dots,T-\tau}  $, $ \lbrace u^x_t \rbrace _{t=1,\dots,T-\tau}  $  and $ \lbrace u^y_t \rbrace _{t=1,\dots,T-\tau}  $ 
	\begin{align*}\numberthis\label{s iter v}
	&v _1 = z_1
	&v_{t+1} = P_{\mathcal{X}}\left(v_t - \alpha_t( g (z_t) - G(z_t,\xi_{t}))\right)\\
	\end{align*}
	Using the definition of error function and $ \tilde{z}_1^T $, we can convert expected error function to  a more friendly expression, from which we will start our analysis. Define $ \Gamma \triangleq \sum_{t=1}^{T}\alpha_t $.
	\begin{align*}		 
	Err_\phi(\tilde{z}_1^T)  &=	  	Err_\phi(\frac{1}{\Gamma}\sum_{t=1}^{T}\alpha_t z_t)    \\
	& =    \max_y\phi(\frac{1}{\Gamma}\sum_{t=1}^{T}\alpha_t x_t,y)-\min_x\phi(x,\frac{1}{\Gamma}\sum_{t=1}^{T}\alpha_t y_t)  \\
	& \overset{(1)}{=}    \max_y\phi(\frac{1}{\Gamma}\sum_{t=1}^{T}\alpha_t x_t,y) + \max_x-\phi(x,\frac{1}{\Gamma} \sum_{t=1}^{T} \alpha_t  y_t) \\
	& \overset{(2)}{\le}   \max_y\frac{1}{\Gamma}\sum_{t=1}^{T}\alpha_t \phi(x_t,y) + \max_x-\frac{1}{\Gamma}\sum_{t=1}^{T}\alpha_t \phi(x, 	 y_t)   \\
	&=   \max_x\max_y \frac{1}{\Gamma}\sum_{t=1}^{T}\alpha_t \left[ \phi( x_t,y)-\phi(x, y_t)\right] \\ 
	&	\overset{(3)}{\le} \max_x\max_y \frac{1}{\Gamma}\sum_{t=1}^{T}\alpha_t \left[(x_t-x)^\top g_x(x_t,y_t) +(y-y_t)^\top g_y(x_t,y_t) \right]   \\
	& =    \max_z \frac{1}{\Gamma}\sum_{t=1}^{T}\alpha_t \left[(z_t-z)^\top g(z_t) \right]  \numberthis\label{s err trans1} 
	\end{align*}
	(1) is a consequence of the Lemma \ref{s lemma-min}.\\
	(2)	follows by the convexity and concavity of $ \phi(x,y) $ with respect to  $ x $ and $ y $ respectively.\\
	(3) follows by the convexity-concavity once again.\\
	
	Notice that we can not bound the right hand side of (\ref{s err trans1}) directly because of the $ max $ operator and non-i.i.d. setting. 
	
	we rewrite   $ z = \argmax\limits_{z^*}  \frac{1}{\Gamma}\sum_{t=1}^{T}\alpha_t \left[(z_t-z^*)^\top g(z_t) \right]$,  it can be shown that $ z $ is measurable with respect to $ \huaf_T $. More specifically,   given $ \huaf_t $, $ t\le T, z$  is a random variable that correlated to $ z_t,\dots,z_T $. 
	
	Previous work that considers the saddle point problem under i.i.d. setting is easy to obtain the bound because they can utilize the i.i.d. property. In their setting, every sample based stochastic gradient function $ G(z_t,\xi_t) $ is unbiased  with respected to the $ g(z_t) $. Notice $ \E [ g(z_t) - G(z_t,\xi_t)  | \huaf_{t-1}] = 0 $. However, in our Markov setting this term cannot be arbitrary small.
	
	On the other hand, previous work that considers the Markov setting for convex function minimization problem is also easy to obtain the result since they are not bothered by random variable $ z $. In their problem   $ z $ is a constant rather than a  random variable which plays a key role when they try to handle the dependent and biased between sampling distribution.
	
	So we cannot apply existing techniques directly or even combine them trivially. %since the key point of existing techniques all relies  oppositely on the  assumption of our Markov setting.
	
	%		 So every $ \E\max $ operator will formed as a expectation of product  of two correlated random variables. Since $ \E[XY] = \E[x]\E[Y]+cov(XY) $, we can not bound  $ \E[XY] $ through $ \E[X]\E[Y] $
	
	%		since it can be much more large than $ \max \E $  
	%		   we firstly need to find a way to handle the $ \E \max $ operator . Then we have to overcome that the data based empirical gradient  is no longer an unbiased sampling to the gradient of objective function. 
	
	Now, to bound the right-hand side of (\ref{s err trans1}), we consider the following decomposition, for any $ \tau \ge 0$. In order to include special $ \tau = 0 $ case, we require : if $t1< t2$,   $ \sum_{t_2}^{t_1}(\cdot)  = 0$ 
	\begin{align*}
	&\sum_{t=1}^{T}\alpha_t \left[(z_t-z)^\top g(z_t) \right] \\
	=	&\sum\limits_{t=1}^{T-\tau}\alpha_t \left[(z_t-z)^\top g(z_t)  \right]  +   \sum_{t=T-\tau + 1}^{T}\alpha_t \left[(z_t-z)^\top g(z_t)  \right]\\
	=	&\sum\limits_{t=1}^{T-\tau}\alpha_t \left[(z_t-z)^\top (  G(z_t,\xi_{t}) -  G_x(z_t,\xi_{t}) 	+  g(z_t)    \right] \\
	&  +   \sum_{t=T-\tau + 1}^{T}\alpha_t \left[(z_t-z)^\top g(z_t)  \right],\\
	\end{align*}
	
	Denote $ g(z_t) - G(z_t,\xi_{t}) $ as $ \delta_t $ ,denote $ g (z_t) - G( z_t,\xi_{t+\tau} ) $ as $ \delta'_t $, denote $  G (z_t,\xi_{t+\tau}) -  G (z_t,\xi_{t})  $  as $ \delta''_t $, and recall the definition of $ v_t   $ ,
	
	\small\begin{align*}
	&   \max_z \sum_{t=1}^{T}\alpha_t \left[(z_t-z)^\top g(z_t) \right]      \numberthis\label{s keydecomposition} \\
	=	&   \max_z \left[ \sum\limits_{t=1}^{T-\tau}\alpha_t \left[\underbrace{(z_t-z)^\top  G (z_t,\xi_{t})}_{(a)} +\underbrace{(z_t-v_t)^\top \delta'_t}_{(b)} + \underbrace{(z_t-v_t)^\top \delta''_t}_{(c)} + \underbrace{(v_t-z)^\top \delta_t }_{(d)}    \right]
	+  \underbrace{ \sum_{t=T-\tau + 1}^{T}\alpha_t \left[(z_t-z)^\top g(z_t)\right]}_{(e)}\right]  .
	\end{align*} 	\normalsize

	In order to prove Theorem \ref{s expectationbound}, we need the following 7 lemmas, we prove these lemmas in section \ref{s sec proof of lemmas}.

	\begin{lemma}\label{s lemma-min}			
		\begin{equation}\label{s -min}
		-\min_x (f(x) )= \max_x (- f(x)).
		\end{equation}
	\end{lemma}

	\begin{lemma}\label{s keykeylemma}
		Let $ {\lbrace x_t\rbrace}  $ be a sequence of elements in  $ \mathbb{R}^n $, let $ \lbrace \Delta_t \rbrace $  be a sequence of element in $ \mathbb{R}^n $. Given the iteration formula 
		\begin{equation*}
		x_{t+1} = x_{t} +\Delta_t	.	
		\end{equation*} 
		Then for $ \forall x^* \in \mathbb{R}^n $ we have 
		\begin{equation}\label{s keykeyeq}
		2\langle x^*-x_t , \Delta_t \rangle  = \Vert \Delta_t \Vert^2 + \Vert  x_{t} -x^* \Vert^2 -\Vert  x_{t+1} - x^* \Vert^2.
		\end{equation}
		
		If the parameter space  $ \mathcal{X}$  is convex and $ \mathcal{X}\subset \mathbb{R}^n $ and using the projection step to constrain $ x  $ in the parameter space during the optimization path, that is 
		$$ x_{t+1} = P_{\mathcal{X}}( x_{t} +\Delta_t)	 $$			
		\begin{equation}\label{s keykeyeq2}
		Then \qquad
		2\langle x^*-x_t , \Delta_t \rangle  \le \Vert \Delta_t \Vert^2 + \Vert  x_{t} -x^* \Vert^2 -\Vert  x_{t+1} - x^* \Vert^2.
		\end{equation}
		
	\end{lemma}

	\begin{lemma}\label{s lemma alphaz}
		If $ z_t $ follows the iteration formula : $ 	z_{t+1} =	\mathcal{P}_{\mathcal{X} }( z_t - \alpha_t(G(z_t,\xi_t))) $, then:
		$$  \Vert  \alpha_{t_1}z_{t_1} -\alpha_{t_2 +1}z_{t_2+1}  \Vert  \le	D  (\alpha_{t_1 } - \alpha_{t_2+1}  )	
		+  L_{1}(t_2 - t_1  )\alpha_{t_1 }^2  .$$
	\end{lemma}

	\begin{lemma}\label{s terma}
		If Assumption \ref{s assumptionstepsize}, \ref{s assumptioncompactness} and \ref{s assumptionlipschitz}   hold,
		\begin{align*}
		\max_z \sum\limits_{t=1}^{T-\tau}\alpha_t \left[(z_t-z)^\top  G(z_t,\xi_{t})\right]    \le \frac{1}{2}D^2 + \frac{1}{2}L_{1}^2\sum_{t=1}^{T-\tau}\alpha_t^2 .
		\end{align*}	
		
	\end{lemma}

	\begin{lemma}\label{s termd}
		%					If Assumption \ref{s assumptionstepsize}, \ref{s assumptioncompactness} and \ref{s assumptionsinglestep}  hold,
		%					\begin{align*}
		%						\E\left[ \max_z \sum\limits_{t=1}^{T-\tau}\alpha_t (v_t-z)^\top \delta_t  \right] \le \frac{1}{2}D^2 + 2G^2\sum_{t=1}^{T-\tau}\alpha_t^2
		%					\end{align*}
		If Assumption \ref{s assumptionstepsize}, \ref{s assumptioncompactness} and \ref{s assumptionlipschitz}   hold,
		\begin{align*}
		\max_z \sum\limits_{t=1}^{T-\tau}\alpha_t (v_t-z)^\top \delta_t    \le \frac{1}{2}D^2 + 2L_{1}^2\sum_{t=1}^{T-\tau}\alpha_t^2.
		\end{align*}	
		
	\end{lemma}
	
	\begin{lemma}\label{s martingale}
		Let Assumption\ref{s assumptioncompactness}, \ref{s assumptionstepsize}, \ref{s assumptionlipschitz} and \ref{s assumptionsmooth} hold and $ \delta \in (0,1) $	With probability $ 1-\delta $ :
		
		\begin{align*}
		\sum_{t=1}^{T-\tau} \left[\alpha_t(z_t-v_t)^\top (g(z_t) - G(z_t,\xi_{t+\tau})) \right]\le  8DL_1  \sqrt{2\tau \log\frac{\tau}{\delta} \left(\sum_{t=1}^{T}\alpha_t^2 + \tau\alpha_0 \right)  } + 2DL_1\sum_{t=1}^{T}\alpha_t\int\vert \pi(\xi) -p_{[t]}^{t+\tau}(\xi) \vert d\xi  .
		\end{align*}
		
	\end{lemma}

	\begin{lemma}\label{s termc}
		%			If Assumption \ref{s assumptionstepsize},f\ref{s assumptioncompactness}, \ref{s assumptionsinglestep} and \ref{s assumptionsmooth} hold,
		%			\begin{align*}
		%			&\E\sum_{t=1}^{T-\tau}\left[\alpha_t(z_t-v_t) (G(z_t,\xi_{t+\tau}) - G(z_t,\xi_{t})) \right]  
		%			 \le	2G\tau(DM +3G)\sum_{t=\tau+1}^{T-\tau}\alpha_{t-\tau}^2 + 2D\alpha +  4\tau DG\alpha 
		%			\end{align*}		
		If Assumption \ref{s assumptionstepsize},\ref{s assumptioncompactness}, \ref{s assumptionlipschitz} and \ref{s assumptionsmooth} hold,
		\begin{align*}
		& \sum_{t=1}^{T-\tau}\left[\alpha_t(z_t-v_t)^\top (G(z_t,\xi_{t+\tau}) - G(z_t,\xi_{t})) \right]  
		\le	2L_{1}\tau(DL_{2} +3L_{1})\sum_{t=\tau+1}^{T-\tau}\alpha_{t-\tau}^2 +  5\tau DL_{1}\alpha_0 .
		\end{align*}
		
	\end{lemma}

	Now we are ready to give the proof of Theorem \ref{s expectationbound} by constructing a novel decomposition and then use   above lemmas.

	Firstly, we apply Lemma \ref{s terma} to bound the term (a), \\
	Then, we apply Lemma \ref{s martingale}  to bound the term(b),\\
	Then, we apply Lemma \ref{s termc} to bound the term(c),\\
	Then, we apply Lemma \ref{s termd} to bound the term(d),\\
	
	Finally,for term (e), notice that
	\begin{align*}
	\E\left[\max_z\sum_{t=T-\tau + 1}^{T}\alpha_t \left[(z_t-z)^\top g(z_t)\right]\right]	\le \tau D L_{1} \alpha_0	,
	\end{align*}

	Combine   above five terms, and for a given $ \eta $ , we set $ \tau  = \tau(\eta) $ by Assumption \ref{s assume mixing time}, then the following bound hold with probability $ 1-\delta $:
	\begin{align*}
	& Err_\phi(\tilde{z}_1^T) \\
	\le&  \frac{1}{\sum_{t=1}^{T}\alpha_t}\Bigg[ D^2 + \frac{5}{2}L_{1}^2 \sum_{t=1}^{T }\alpha_t^2 + 2DL_{1}\eta\sum_{t=1}^{T }\alpha_t + 2L_{1} \tau(\eta) (3L_{1} + DL_{2})\sum_{t= 1}^{T }\alpha_{t}^2 + 6\tau(\eta)DL_{1}\alpha_0 \\ &  +   8DL_1  \sqrt{2\tau(\eta) \log\frac{\tau(\eta)}{\delta} \left(\sum_{t=1}^{T}\alpha_t^2 + \tau(\eta)\alpha_0 \right)  }    \Bigg]\\
	\le&	\frac{1}{\sum\limits_{t=1}^{T}\alpha_t} \Bigg[ A  +B \sum_{t=1}^{T}\alpha_t^2 + C\tau(\eta)\sum_{t=1}^{T}\alpha_t^2  + F\eta\sum_{t=1}^{T}\alpha_t + H\tau(\eta)
	+  8DL_1 \sqrt{2\tau(\eta) \log\frac{\tau(\eta)}{\delta} \left(\sum_{t=1}^{T}\alpha_t^2 + \tau(\eta)\alpha_0 \right)} \Bigg]
	\end{align*}
	where 
	\begin{align*}
	A = D^2   \qquad
	B = \frac{5}{2} L_1^2 \qquad
	C = 6L_1^2 + 2L_1L_2D \qquad
	F = 2L_1D  \qquad
	H = 6L_1D\alpha_0  .
	\end{align*} 
	
\end{proof}

\subsubsection{Proof of Corollary 1}

\begin{corollary}\label{s expectationbound}
	
	Suppose Assumption \ref{s assumptioncompactness},\ref{s assumptionstepsize},\ref{s assumptionlipschitz},\ref{s assumptionsmooth} hold, then for the gradient algorithm optimizing the convex-concave saddle point problem,  $\forall \eta > 0 $ such that $ \tau(\eta)\le T/2$, we have 
	\small\begin{align*}
	\E_{\mathcal D} [Err_\phi(\tilde{z}_1^T)] 
	\le 		\frac{1}{\sum_{t=1}^{T}\alpha_t} \left[ A  +B \sum_{t=1}^{T}\alpha_t^2 + C\tau(\eta)\sum_{t=1}^{T}\alpha_t^2  + F\eta\sum_{t=1}^{T}\alpha_t + H\tau(\eta) \right], \forall \eta>0,
	\end{align*}\normalsize

\end{corollary}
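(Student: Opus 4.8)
The plan is to reuse wholesale the machinery already assembled for Theorem \ref{s highprobability}. The error transformation (\ref{s err trans1}) and the key decomposition (\ref{s keydecomposition}) are probabilistic identities that hold verbatim here, so I would take $\E_{\mathcal D}$ of both sides of (\ref{s err trans1}) and then control the expectation of each of the five pieces (a)--(e) of (\ref{s keydecomposition}) in turn. The bounds furnished by Lemma \ref{s terma} (term (a)), Lemma \ref{s termc} (term (c)) and Lemma \ref{s termd} (term (d)) are deterministic, using only Assumptions \ref{s assumptioncompactness}, \ref{s assumptionstepsize}, \ref{s assumptionlipschitz}, \ref{s assumptionsmooth}; they therefore pass through the expectation untouched. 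Term (e) likewise satisfies $\max_z\sum_{t=T-\tau+1}^{T}\alpha_t(z_t-z)^\top g(z_t)\le \tau D L_1\alpha_0$ surely, since $\|z_t-z\|\le D$, $\|g(z_t)\|\le L_1$ and $\alpha_t\le\alpha_0$. Together these four pieces already deliver the terms $A$, $B\sum\alpha_t^2$, $C\tau(\eta)\sum\alpha_t^2$ and $H\tau(\eta)$.

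The one place that genuinely simplifies relative to the high-probability proof is term (b), and this is where the work lies. The decisive structural fact --- the very reason the auxiliary sequence $\{v_t\}$ was introduced --- is that $z_t-v_t$ is $\huaf_{t-1}$-measurable, hence $\huaf_t$-measurable, and in particular does not involve the arg-max variable $z$ (which is only $\huaf_T$-measurable and entangled with future samples). Conditioning on $\huaf_t$ and pulling out the constants $z_t-v_t$, $g(z_t)$, together with $g(z_t)=\int G(z_t,\xi)\pi(\xi)\,d\xi$, gives
$$\E\big[(z_t-v_t)^\top\delta'_t \mid \huaf_t\big] = (z_t-v_t)^\top\int G(z_t,\xi)\big(\pi(\xi)-p_{[t]}^{t+\tau}(\xi)\big)\,d\xi.$$
Applying $\|z_t-v_t\|\le D$ and $\|G(z_t,\xi)\|\le L_1$, then summing over $t$ and invoking the tower property, reproduces exactly the drift term $2DL_1\sum_t\alpha_t\int|\pi(\xi)-p_{[t]}^{t+\tau}(\xi)|\,d\xi$ of Lemma \ref{s martingale}; the martingale-difference part of that lemma has conditional mean zero and so vanishes in expectation. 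Choosing $\tau=\tau(\eta)$ and invoking the mixing-time Assumption \ref{s assume mixing time} caps each integral by $\eta$, so the contribution of term (b) is at most $2DL_1\eta\sum_t\alpha_t=F\eta\sum_t\alpha_t$. This is precisely why the Azuma/$\sqrt{\cdot}$ term present in Theorem \ref{s highprobability} is simply absent from the in-expectation bound.

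Finally I would add the five expectation bounds, fix $\tau=\tau(\eta)$ uniformly as permitted by Assumption \ref{s assume mixing time}, and read off the constants $A=D^2$, $B=\tfrac52 L_1^2$, $C=6L_1^2+2L_1L_2D$, $F=2L_1D$, $H=6L_1D\alpha_0$ to obtain the stated inequality. I expect the only delicate point to be the measurability bookkeeping in term (b): one must be careful that $z_t-v_t\in\huaf_{t-1}\subseteq\huaf_t$, so that conditioning on $\huaf_t$ genuinely exposes the conditional density $p_{[t]}^{t+\tau}$ of $\xi_{t+\tau}$ (rather than a distribution correlated with the arg-max $z$), after which the definition of mixing time applies directly. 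Everything else is a routine re-collection of the already-proved lemmas.
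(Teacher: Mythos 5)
Your proposal is correct and follows essentially the same route as the paper: its proof of Corollary \ref{s expectationbound} likewise reuses Lemmas \ref{s terma}, \ref{s termc}, \ref{s termd} and the direct bound on term (e), and replaces the Azuma-based treatment of term (b) with the in-expectation bound of Lemma \ref{s termb}, whose conditional-expectation computation (exploiting that $z_t-v_t$ is $\huaf_{t-1}$-measurable, then bounding $(z_t-v_t)^\top\int G(z_t,\xi)(\pi(\xi)-p_{[t]}^{t+\tau}(\xi))\,d\xi$ by $2DL_1\int|\pi(\xi)-p_{[t]}^{t+\tau}(\xi)|\,d\xi$ and capping by $\eta$ via Assumption \ref{s assume mixing time}) you re-derive verbatim. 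Your constant bookkeeping ($A$ from (a) and (d), $B=\tfrac52 L_1^2$, $C$ from (c), $H\tau(\eta)$ collecting $5\tau DL_1\alpha_0$ from (c) plus $\tau DL_1\alpha_0$ from (e), and $F\eta\sum_t\alpha_t$ from (b)) matches the paper exactly.
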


Firstly, we give a key lemma that will be used in the proof of Corollary \ref{s highprobability}.

\begin{lemma}\label{s termb}
	Let Assumption \ref{s assumptioncompactness},\ref{s assumptionstepsize}, \ref{s assumptionlipschitz} and \ref{s assume mixing time} hold, $ \tau \ge 0  $,  recall the definition of $ v_t $  :
	%					\begin{align*}\numberthis\label{s iter v}
	%					&v _1 = z_1
	%					&v_{t+1} = P_{\mathcal{X}}\left(v_t - \alpha_t( g (z_t) - G(z_t,\xi_{t}))\right)\\
	%					\end{align*}
	\begin{align*}
	\E\left[ \sum_{t=1}^{T-\tau}\alpha_t(z_t-v_t)^\top \delta'_t \right] &\le  2DL_{1}\sum_{t=1}^{T-\tau}\alpha_t\E\int\vert \pi(\xi) -p_{[t]}^{t+\tau}(\xi) \vert d\xi .
	\end{align*}		
\end{lemma}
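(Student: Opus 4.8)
The plan is to exploit the structural feature of Term (b) flagged in the sketch: the factor $z_t - v_t$ involves neither the $\max$ operator nor the future sample $\xi_{t+\tau}$. Indeed, $z_t$ is determined by $\xi_1,\dots,\xi_{t-1}$ through its update rule, and by its defining recursion $v_{t+1} = P_{\mathcal{X}}(v_t - \alpha_t(g(z_t)-G(z_t,\xi_t)))$ one checks inductively that $v_t$ also depends only on $\xi_1,\dots,\xi_{t-1}$; hence $z_t - v_t$ is $\huaf_{t-1}$-measurable, and a fortiori $\huaf_t$-measurable. The only quantity in $\delta'_t = g(z_t) - G(z_t,\xi_{t+\tau})$ that is not $\huaf_t$-measurable is $G(z_t,\xi_{t+\tau})$, entering through the future sample $\xi_{t+\tau}$. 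So first I would apply the tower property term by term, conditioning on $\huaf_t$ and pulling the $\huaf_t$-measurable factors $z_t - v_t$ and $g(z_t)$ outside the inner expectation:
$$\E\left[\alpha_t (z_t - v_t)^\top \delta'_t\right] = \E\left[\alpha_t (z_t - v_t)^\top \left(g(z_t) - \E[G(z_t,\xi_{t+\tau})\mid \huaf_t]\right)\right].$$

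Next I would make the two expectations of $G(z_t,\cdot)$ explicit. Since $\phi = \E_\xi[\Phi]$ with $\xi \sim \Pi$, interchanging gradient and expectation gives $g(z_t) = \int G(z_t,\xi)\pi(\xi)\,d\xi$, while the conditional law of $\xi_{t+\tau}$ given $\huaf_t$ has density $p_{[t]}^{t+\tau}$, so $\E[G(z_t,\xi_{t+\tau})\mid \huaf_t] = \int G(z_t,\xi)\,p_{[t]}^{t+\tau}(\xi)\,d\xi$ (with $z_t$ fixed under the conditioning). Subtracting yields the key representation
$$g(z_t) - \E[G(z_t,\xi_{t+\tau})\mid \huaf_t] = \int G(z_t,\xi)\bigl(\pi(\xi) - p_{[t]}^{t+\tau}(\xi)\bigr)\,d\xi,$$
which isolates exactly the gap between the stationary density and the $\tau$-step-ahead conditional density that the mixing-time Assumption \ref{s assume mixing time} is designed to control.

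Finally I would bound each summand by Cauchy--Schwarz together with the triangle inequality for vector integrals, using $\|z_t - v_t\| \le D$ (Assumption \ref{s assumptioncompactness}) and the uniform bound $\|G(z_t,\xi)\| \le L_1$ that follows from the Lipschitz Assumption \ref{s assumptionlipschitz}, giving
$$\left|\E\left[\alpha_t (z_t - v_t)^\top \delta'_t\right]\right| \le D L_1\,\alpha_t\,\E\!\int \bigl|\pi(\xi) - p_{[t]}^{t+\tau}(\xi)\bigr|\,d\xi,$$
and then summing over $t = 1,\dots,T-\tau$ delivers the claimed estimate (the stated constant $2DL_1$ simply carries extra slack, matching the form that appears in Lemma \ref{s martingale}). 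The step I expect to require the most care is the measure-theoretic bookkeeping rather than any hard inequality: confirming that $v_t$ is genuinely $\huaf_{t-1}$-measurable under its recursion, justifying the differentiation-under-the-expectation that gives $g(z_t) = \int G(z_t,\xi)\pi(\xi)\,d\xi$, and especially recognizing that $p_{[t]}^{t+\tau}$ is itself a random density measurable with respect to $\huaf_t$ — which is precisely why the final bound must retain the outer expectation $\E\!\int|\pi - p_{[t]}^{t+\tau}|$ instead of a deterministic integral.
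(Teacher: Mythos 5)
Your proposal is correct and follows essentially the same route as the paper's own proof: condition on $\huaf_t$, pull out the $\huaf_t$-measurable factor $z_t-v_t$, represent $g(z_t)-\E[G(z_t,\xi_{t+\tau})\mid\huaf_t]=\int G(z_t,\xi)\bigl(\pi(\xi)-p_{[t]}^{t+\tau}(\xi)\bigr)\,d\xi$, bound via $\Vert z_t-v_t\Vert\le D$ and $\Vert G(z_t,\xi)\Vert\le L_1$, then take the outer expectation and sum over $t$. Your remark that the argument actually yields $DL_1$ while the paper's stated $2DL_1$ carries slack is accurate, as is your emphasis on keeping the outer expectation because $p_{[t]}^{t+\tau}$ is a random ($\huaf_t$-measurable) density.
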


\begin{proof}[\textbf{Proof of Theorem \ref{s highprobability}}]
	The result can be obtained by replacing the term (b) in decomposition (\ref{s keydecomposition}) by an expectation upper bound using Lemma \ref{s termb} and using Lemma \ref{s terma}, \ref{s termc}, \ref{s termd} once again to bound the rest term.
\end{proof}

\subsubsection{Proof of theorem 2}

\begin{theorem}\label{s gtdbound} 
	Suppose assumptions 1-4 hold, then we have the following finite sample bounds for the  error $ \Vert V- \tilde{v}_1^T \Vert_\pi $ in GTD algorithms. In on-policy case, the expectation bound is \small$\mathcal{O} \left(\frac{L \sqrt{L^4 d^3 \lambda_M\pi_{max} (1+\tau(\eta))\pi_{max}o_1}}{\nu_C}\right)$\normalsize and the high probability bound is \small$\mathcal{O}\left(\frac{\sqrt{  L^4 d^2 \lambda_M\pi_{max} }}{\nu_C} \left( \sqrt{(1+\tau(\eta))L^2 d o_1 + \log\left(\frac{\tau(\eta)}{\delta} \right)o_2} \right) \right)$\normalsize;		
	In off-policy case, the expectation bound is \small$\mathcal{O} \left(\frac{L^2d\sqrt{2\lambda_C\lambda_M\pi_{max} (1+\tau(\eta))o_1}}{\nu_{(A^TM^{-1}A)}}\right)$\normalsize anf the high probability bound is \small$\mathcal{O} \left(\frac{\sqrt{2\lambda_C\lambda_M\pi_{max} }}{\nu_{(A^TM^{-1}A)}}  \left( \sqrt{L^4d^2(1+\tau(\eta))o_1 + \log{(\frac{\tau(\eta)}{\delta})}o_2 } \right) \right)$\normalsize, 
	where $ \nu_C$ is the smallest eigenvalue of the $C$ , $ o_1 =   (\frac{\sum_{t=1}^{T}\alpha_t^2}{\sum_{t=1}^{T}\alpha_t}) , o_2 =   (\frac{\sqrt{\sum_{t=1}^{T}\alpha_t^2}}{\sum_{t=1}^{T}\alpha_t}) $. 		
	
\end{theorem}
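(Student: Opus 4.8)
The plan is to treat the GTD estimation error as the composition of two reductions: first the reduction of \cite{liu2015finite} expressing the value error $\Vert V-\tilde v_1^T\Vert_\pi$ in terms of the primal-dual gap $Err_\phi(\tilde z_1^T)$ of the saddle point problem (\ref{saddle point problem}), and second the gap bounds already proved in Theorem \ref{s highprobability} and Corollary \ref{s expectationbound}. Since assumptions 1--4 let Proposition \ref{GTD bound prepare} supply explicit constants $L_1\le\sqrt{2}(2D(1+\gamma)\rho_{max}L^2d+\rho_{max}LR_{max}+\lambda_M)$ and $L_2\le\sqrt{2}(2(1+\gamma)\rho_{max}L^2d+\lambda_M)$, I would substitute these into the constants $A,B,C,F,H$ of those gap bounds. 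Because the dominant dependence of $L_1$ and $L_2$ on the problem data is of order $L^2d$, the terms $B\sum\alpha_t^2+C\tau(\eta)\sum\alpha_t^2$, divided by $\sum\alpha_t$, contribute order $L^4d^2(1+\tau(\eta))o_1(T)$, while the Azuma term $8DL_1\sqrt{2\tau(\eta)\log(\tau(\eta)/\delta)(\sum\alpha_t^2+\tau(\eta)\alpha_0)}$, divided by $\sum\alpha_t$, contributes order $L^2d\sqrt{\tau(\eta)\log(\tau(\eta)/\delta)}\,o_2(T)$; this is exactly how $o_1(T)$ and $o_2(T)$ enter the statement.

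The core of the argument is the gap-to-value conversion. Maximizing $L(x,y)=\langle b-Ax,y\rangle-\frac{1}{2}\Vert y\Vert_M^2$ over $y$ gives $\max_y L(x,y)=\frac{1}{2}\Vert b-Ax\Vert_{M^{-1}}^2=\frac{1}{2}J(x)$, and since $\phi(x^\star,y)=-\frac{1}{2}\Vert y\Vert_M^2\le 0$ at the minimizer $x^\star$ (where $b=Ax^\star$), the primal-dual gap satisfies $\frac{1}{2}J(\tilde x_1^T)\le Err_\phi(\tilde z_1^T)$. In the off-policy case I would then write $J(\tilde x_1^T)=\Vert A(x^\star-\tilde x_1^T)\Vert_{M^{-1}}^2\ge \nu_{(A^TM^{-1}A)}\Vert \tilde x_1^T-x^\star\Vert^2$ to control the parameter error, and pass to the value function through $\Vert \Phi(\tilde x_1^T-x^\star)\Vert_\pi^2=(\tilde x_1^T-x^\star)^TC(\tilde x_1^T-x^\star)\le\lambda_C\Vert \tilde x_1^T-x^\star\Vert^2$, which is where $\lambda_C$ and $\pi_{max}$ appear. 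Taking a square root at the end (the gap bounds a squared quantity) produces the $\sqrt{\cdot}$ form and the $1/\nu_{(A^TM^{-1}A)}$ prefactor; combining with Corollary \ref{s expectationbound} gives the in-expectation bound and with Theorem \ref{s highprobability} the high-probability bound.

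For the on-policy case I would exploit the fact that behavior and target policies coincide, so $\rho\equiv1$ and $A=C-\gamma\Phi^TKP\Phi$ is tied to $C$ (as used in \cite{liu2015finite}); this lets the estimation error be controlled through the smallest eigenvalue $\nu_C$ of $C$ rather than through $\nu_{(A^TM^{-1}A)}$, and accounts for the different prefactor and the extra powers of $d$ and $\pi_{max}$ visible in the on-policy statement. The rest is the mechanical propagation of $A,B,C,F,H$ with $L_1,L_2$ replaced by their Proposition \ref{GTD bound prepare} values, retaining only the dominant order in $L$, $d$, $\lambda_M$, $\pi_{max}$ and the relevant eigenvalue quantities.

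I expect the main obstacle to be the gap-to-value reduction, and specifically the on-policy case: the off-policy reduction is a fairly direct eigenvalue estimate, whereas the on-policy bound requires the special relationship between $A$ and $C$ to replace $\nu_{(A^TM^{-1}A)}$ by $\nu_C$ while correctly tracking how $\lambda_M$, $\lambda_C$ and $\pi_{max}$ enter. A secondary subtlety is checking that the $L_1^2$ factors and the mixing terms combine so the two bounds collapse exactly to orders $\sqrt{(1+\tau(\eta))o_1(T)}$ in expectation and $\sqrt{(1+\tau(\eta))o_1(T)+\sqrt{\tau(\eta)\log(\tau(\eta)/\delta)}\,o_2(T)}$ with high probability.
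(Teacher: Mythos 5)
Your proposal matches the paper's proof: the paper likewise obtains Theorem 2 by substituting the explicit Lipschitz/smoothness constants of Proposition \ref{GTD bound prepare} (i.e., Lemma 2 of \cite{liu2015finite} plugged into the gradient bounds) into the gap bounds of Theorem \ref{s highprobability} and the expectation bound, and then converting the primal-dual gap to the value error via Propositions 4--5 of \cite{liu2015finite} — exactly the on-policy ($\nu_C$, $\lambda_M$, $\pi_{max}$) and off-policy ($\nu_{(A^TM^{-1}A)}$, $\lambda_C$) reductions you sketch. The only difference is that you re-derive the gap-to-value conversion (your $\frac{1}{2}J(\tilde{x}_1^T)\le Err_\phi(\tilde{z}_1^T)$ argument and the eigenvalue estimates) where the paper simply cites it, so your account is, if anything, more self-contained.
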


\begin{lemma}\label{s prop gm bound}
	Assumption \ref{s assrl bound},  implies Assumption   \ref{s assumptionlipschitz},\ref{s assumptionsmooth}, and
	\begin{align*}
	L_{1}^2 & \le   2(   2\Vert A \Vert D  +   \Vert b \Vert   +   \lambda_M  D   )^2 ,\\
	L_{2}^2 & \le   2(2\Vert A \Vert +\lambda_M)^2 .				
	\end{align*}
	
\end{lemma}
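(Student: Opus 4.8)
The plan is to compute the four partial constants $L_{1x},L_{1y},L_{2x},L_{2y}$ directly from the explicit form of the GTD saddle integrand, and then assemble them through the definitions $L_1=\sqrt{2}\sqrt{L_{1x}^2+L_{1y}^2}$ and $L_2=\sqrt{2}\sqrt{L_{2x}^2+L_{2y}^2}$ from Assumptions \ref{s assumptionlipschitz}--\ref{s assumptionsmooth}. Recall that for a single sample $\xi$ the integrand is $\Phi(x,y,\xi)=\langle \hat b-\hat Ax,\,y\rangle-\tfrac12\,y^\top\hat M y$, with partial gradients $G_x(x,y,\xi)=-\hat A^\top y$ and $G_y(x,y,\xi)=\hat b-\hat Ax-\hat My$ (signs being immaterial for Lipschitz moduli). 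Under the bounded-data Assumption \ref{s assrl bound} the per-sample realizations $\hat A,\hat b,\hat M$ have uniformly bounded norms, which I denote $\Vert A\Vert,\Vert b\Vert$ and $\lambda_M$; the mere finiteness of these bounds already delivers the Lipschitz and smooth properties, so the real content is tracking the constants. Throughout I use the domain bound $\Vert x\Vert,\Vert y\Vert\le D$ from Assumption \ref{s assumptioncompactness}.

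First I would bound the Lipschitz constants of $\Phi$. In the $x$ direction, $|\Phi(x',y,\xi)-\Phi(x,y,\xi)|=|\langle \hat A(x-x'),y\rangle|\le \Vert A\Vert\,\Vert y\Vert\,\Vert x-x'\Vert\le \Vert A\Vert D\,\Vert x-x'\Vert$, giving $L_{1x}\le \Vert A\Vert D$. In the $y$ direction I split $\Phi(x,y,\xi)-\Phi(x,y',\xi)$ into the bilinear part $\langle \hat b-\hat Ax,\,y-y'\rangle$, bounded by $(\Vert b\Vert+\Vert A\Vert D)\Vert y-y'\Vert$, and the quadratic part $\tfrac12(y^\top\hat My-y'^\top\hat My')$. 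The quadratic part is handled by the symmetric identity $y^\top\hat My-y'^\top\hat My'=(y-y')^\top\hat M(y+y')$, yielding $\tfrac12\lambda_M\Vert y+y'\Vert\,\Vert y-y'\Vert\le \lambda_M D\,\Vert y-y'\Vert$. Hence $L_{1y}\le \Vert b\Vert+\Vert A\Vert D+\lambda_M D$.

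Next I would bound the gradient Lipschitz constants, reading $L_{2x},L_{2y}$ as the moduli of the partial gradients in the full variable $z$ (this is the natural interpretation, since $G_x$ is constant in $x$). From $\Vert G_x(z,\xi)-G_x(z',\xi)\Vert=\Vert \hat A^\top(y-y')\Vert\le \Vert A\Vert\,\Vert z-z'\Vert$ I get $L_{2x}\le \Vert A\Vert$, and from $\Vert G_y(z,\xi)-G_y(z',\xi)\Vert\le \Vert A\Vert\Vert x-x'\Vert+\lambda_M\Vert y-y'\Vert\le(\Vert A\Vert+\lambda_M)\Vert z-z'\Vert$ I get $L_{2y}\le \Vert A\Vert+\lambda_M$.

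Finally I would assemble the pieces using $a^2+b^2\le(a+b)^2$ for nonnegative $a,b$ together with $L_{1x}+L_{1y}\le 2\Vert A\Vert D+\Vert b\Vert+\lambda_M D$ and $L_{2x}+L_{2y}\le 2\Vert A\Vert+\lambda_M$: the definitions give $L_1^2=2(L_{1x}^2+L_{1y}^2)\le 2(2\Vert A\Vert D+\Vert b\Vert+\lambda_M D)^2$ and $L_2^2=2(L_{2x}^2+L_{2y}^2)\le 2(2\Vert A\Vert+\lambda_M)^2$, as claimed. The computation is essentially routine; the only points needing care are the symmetric-difference identity for the quadratic term in $y$ (and the bound $\Vert y+y'\Vert\le 2D$ it relies on), the reading of $L_{2x},L_{2y}$ as moduli in the full $z$, and the understanding of $\Vert A\Vert,\Vert b\Vert,\lambda_M$ as uniform per-sample bounds — which is exactly what Assumption \ref{s assrl bound} supplies. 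The subsequent Proposition in the main text then converts these into the explicit feature/reward/importance-weight bounds by estimating $\Vert A\Vert$ and $\Vert b\Vert$ in terms of $L,d,\gamma,\rho_{max}$ and $R_{max}$.
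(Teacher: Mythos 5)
Your proposal is correct and takes essentially the same route as the paper: the paper's proof bounds the per-sample gradient components and Jacobian blocks by $\Vert A\Vert D$, $\Vert b\Vert + (\Vert A\Vert + \lambda_M)D$, $\Vert A\Vert$, and $\Vert A\Vert + \lambda_M$, then assembles them through $L_i^2 = 2(L_{ix}^2 + L_{iy}^2)$ and $a^2 + b^2 \le (a+b)^2$, yielding exactly your constants. Your only deviations are cosmetic --- you derive $L_{1x}, L_{1y}$ from function-value differences (using the symmetric identity for the quadratic term) where the paper just sets the Lipschitz constant equal to the gradient-norm bound, and you make explicit two conventions the paper uses tacitly, namely reading $\Vert A\Vert, \Vert b\Vert, \lambda_M$ as uniform per-sample bounds on $\hat A_t, \hat b_t, \hat M_t$ and reading $L_{2x}, L_{2y}$ as Lipschitz moduli in the full variable $z$.
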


Here we state three results from \cite{liu2015finite}.
\begin{lemma}[Lemma 2 of \cite{liu2015finite}]\label{s lemma norm bound}
	For $ \forall \xi $,	the l2-norm of matrix $ \hat{A}_t $ and the l2-norm of vector $ \hat{b}_t $ are bounded by 
	\begin{align*}
	&\Vert \hat{A}_t \Vert_2 \le (1+\gamma)\rho_{max}L^2d, & \Vert \hat{b}_t \Vert_2 \le\rho_{max}L R_{max}\numberthis.
	\end{align*}
\end{lemma}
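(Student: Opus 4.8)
The plan is to prove both bounds by a direct norm computation that exploits the algebraic form of the two estimators. Recall the definitions $\hat{A}_t = \rho(s_t,a_t)\,\phi(s_t)\bigl(\phi(s_t)-\gamma\phi(s_t')\bigr)^\top$ and $\hat{b}_t = \rho(s_t,a_t)\,\phi(s_t)\,r_t$. In both cases I would first isolate the scalar importance-weight factor $\rho(s_t,a_t)$, bound it by $\rho_{max}$ through Assumption \ref{s assrl bound}, and then control the remaining vector- or matrix-valued part.

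For $\hat{b}_t$ the estimate is immediate. Since $\hat{b}_t$ is the scalar $\rho(s_t,a_t)\,r_t$ times the vector $\phi(s_t)$, its Euclidean norm factorizes as $\Vert\hat{b}_t\Vert_2 = |\rho(s_t,a_t)|\,|r_t|\,\Vert\phi(s_t)\Vert_2$. Applying $|\rho(s_t,a_t)|\le\rho_{max}$, $|r_t|\le R_{max}$, and the feature bound $L$ from Assumption \ref{s assrl bound} then delivers the claimed $\rho_{max}L R_{max}$.

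For $\hat{A}_t$ the key observation is that, up to the scalar $\rho(s_t,a_t)$, it is a rank-one outer product $u v^\top$ with $u=\phi(s_t)$ and $v=\phi(s_t)-\gamma\phi(s_t')$. I would use the identity $\Vert u v^\top\Vert_2 = \Vert u\Vert_2\,\Vert v\Vert_2$ for the spectral norm of a rank-one matrix, which reduces the problem to bounding two feature-type vectors. The second factor is handled by the triangle inequality, $\Vert v\Vert_2 \le \Vert\phi(s_t)\Vert_2 + \gamma\Vert\phi(s_t')\Vert_2$, so that $v$ contributes a factor $(1+\gamma)$ times the feature norm; combining the two factors with $\rho_{max}$ yields $(1+\gamma)\rho_{max}L^2 d$.

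The one step that deserves care, and which I expect to be the only non-mechanical part of the argument, is the bookkeeping of the dimension factor $d$. It enters through the conversion between the norm in which Assumption \ref{s assrl bound} bounds the features and the $\ell_2$-norm appearing in the estimates: if the features are bounded in max-norm by $L$, then $\Vert\phi(s)\Vert_2 \le \sqrt{d}\,L$, so each feature factor contributes a $\sqrt{d}$, and in particular the two factors in the rank-one bound for $\hat{A}_t$ together account for the overall $d$. Tracking these powers of $d$ consistently across the two estimates is the main obstacle; everything else follows from submultiplicativity of the operator norm, the triangle inequality, and the uniform data bounds of Assumption \ref{s assrl bound}.
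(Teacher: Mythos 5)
Your route---factor out the scalar $\rho(s_t,a_t)$, bound it by $\rho_{max}$, use the rank-one spectral-norm identity $\Vert uv^\top\Vert_2=\Vert u\Vert_2\,\Vert v\Vert_2$ for $\hat{A}_t$, and the triangle inequality to extract the $(1+\gamma)$ factor---is exactly the standard computation behind this lemma. Note that the paper itself gives no independent derivation: the statement is imported verbatim from \cite{liu2015finite}, and the argument printed in the supplementary under this lemma's name actually establishes the gradient Lipschitz bounds in terms of $\Vert A\Vert$, $\Vert b\Vert$, $\lambda_M$, not the bounds on $\hat{A}_t$ and $\hat{b}_t$. So on approach you are aligned with the intended proof.

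There is, however, a genuine inconsistency in your dimension bookkeeping, located precisely at the step you yourself flagged as the only delicate one. The paper's bounded-data assumption (supplementary version) bounds the \emph{max norm} of the features by $L$, so $\Vert\phi(s)\Vert_2\le\sqrt{d}\,L$. Under that convention your rank-one computation for $\hat{A}_t$ correctly yields $\rho_{max}\cdot\sqrt{d}L\cdot(1+\gamma)\sqrt{d}L=(1+\gamma)\rho_{max}L^2d$, but the same conversion applied to $\hat{b}_t$ gives $\Vert\hat{b}_t\Vert_2=|\rho|\,|r|\,\Vert\phi(s_t)\Vert_2\le\rho_{max}R_{max}\sqrt{d}\,L$, not the claimed $\rho_{max}LR_{max}$; your second paragraph silently invokes an $\ell_2$ feature bound $\Vert\phi(s_t)\Vert_2\le L$ instead. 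Conversely, if one adopts the $\ell_2$ convention throughout, both displayed bounds do hold, but then the factor $d$ in the $\hat{A}_t$ bound is pure slack (valid only because $d\ge1$) rather than arising from two $\sqrt{d}$ conversion factors as your final paragraph asserts. Whichever convention you fix, one of your two paragraphs fails: under the max-norm reading the $\hat{b}_t$ bound as stated is simply false for $d>1$ (take $\phi(s_t)=L\,\mathbf{1}$, $r_t=R_{max}$, $\rho=\rho_{max}$, giving $\Vert\hat{b}_t\Vert_2=\rho_{max}LR_{max}\sqrt{d}$), while under the $\ell_2$ reading your explanation of the origin of $d$ is wrong. To be fair, this mismatch is already present in the lemma as stated (it is inherited from the cited source); the clean repair is to fix one normalization, e.g.\ $\Vert\phi(s)\Vert_2\le\sqrt{d}L$, and prove $\Vert\hat{A}_t\Vert_2\le(1+\gamma)\rho_{max}L^2d$ together with $\Vert\hat{b}_t\Vert_2\le\rho_{max}\sqrt{d}LR_{max}$, the extra $\sqrt{d}$ being harmless downstream since it only inflates the Lipschitz constants $L_1$, $L_2$.
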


\begin{lemma}[Proposition 4 of \cite{liu2015finite}]
	Let V be the value of the target policy and $ \tilde{v}_1^T = \Phi\tilde{x}_1^T $, where $ \tilde{x}_1^T$ is the value function returned by on policy GTD algorithms. Then  we have 
	$ \Vert V-\tilde{v}_1^T \Vert_\pi \le \frac{1}{1-\gamma}\left( \Vert V - \Pi V \Vert_\pi  + \frac{L}{\nu_C}\sqrt{2d\lambda_M\pi_{max}Err(\tilde{x}_1^T,\tilde{y}_1^T)} \right) $.
\end{lemma}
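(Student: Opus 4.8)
The plan is to split the value error at the on-policy TD fixed point and control the two resulting pieces separately: one by a contraction argument and the other by the saddle-point gap. Let $x^\ast$ be the TD fixed point, i.e. the unique solution of $Ax^\ast=b$ (existence and uniqueness follow from the non-singularity Assumption \ref{assrl nonsingular}), which is equivalent to $\Phi x^\ast = \Pi T^\mu \Phi x^\ast$, where $\Pi$ denotes the $\pi$-weighted projection onto the column space of $\Phi$. First I would apply the triangle inequality in $\|\cdot\|_\pi$,
$$\|V-\tilde v_1^T\|_\pi \le \|V-\Phi x^\ast\|_\pi + \|\Phi x^\ast - \Phi\tilde x_1^T\|_\pi,$$
so the first term is the irreducible fixed-point bias and the second is the estimation error of the GTD iterate around $x^\ast$. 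For the bias term I would invoke the Tsitsiklis--Van Roy contraction argument: on-policy, $T^\mu$ is a $\gamma$-contraction in $\|\cdot\|_\pi$ and $\Pi$ is non-expansive in $\|\cdot\|_\pi$, so $\Pi T^\mu$ is a $\gamma$-contraction with fixed point $\Phi x^\ast$. Using $V=T^\mu V$, hence $\Pi V = \Pi T^\mu V$, we get $\|\Pi V - \Phi x^\ast\|_\pi = \|\Pi T^\mu V - \Pi T^\mu \Phi x^\ast\|_\pi \le \gamma\|V-\Phi x^\ast\|_\pi$, and then $\|V-\Phi x^\ast\|_\pi \le \|V-\Pi V\|_\pi + \gamma\|V-\Phi x^\ast\|_\pi$, which rearranges to $\|V-\Phi x^\ast\|_\pi \le \frac{1}{1-\gamma}\|V-\Pi V\|_\pi$. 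This supplies exactly the $\frac{1}{1-\gamma}\|V-\Pi V\|_\pi$ contribution.

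For the estimation term I would set $u = \tilde x_1^T - x^\ast$ and note $Au = A\tilde x_1^T - b$, then chain norm equivalences: $\|\Phi u\|_\pi = \|u\|_C \le \sqrt{\lambda_C}\,\|u\|_2$, bound $\|u\|_2$ by the residual through the smallest singular value of $A$, and finally convert the residual into the $M^{-1}$-norm via $\|A\tilde x_1^T-b\|_2 \le \sqrt{\lambda_M}\,\|A\tilde x_1^T-b\|_{M^{-1}}$. The link to the gap is the lower bound $Err(\tilde x_1^T,\tilde y_1^T) \ge \tfrac12\|A\tilde x_1^T-b\|_{M^{-1}}^2$: maximizing the concave quadratic $L(\tilde x,\cdot)$ over $y$ gives precisely $\tfrac12\|b-A\tilde x\|_{M^{-1}}^2 = \tfrac12 J(\tilde x)$, while $\min_x L(x,\tilde y) \le L(x^\ast,\tilde y) \le \max_y L(x^\ast,y)=\tfrac12\|b-Ax^\ast\|_{M^{-1}}^2=0$, so the primal part alone is dominated by the gap. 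Hence $\|A\tilde x_1^T-b\|_{M^{-1}} \le \sqrt{2\,Err}$. Absorbing the feature-dependent constants $\lambda_C,\lambda_M,\pi_{max}$ (each controlled by $L$ and $d$ under Assumption \ref{assrl bound}) and using the $A$-to-$C$ conversion described next produces $\|\Phi u\|_\pi \le \frac{1}{1-\gamma}\frac{L}{\nu_C}\sqrt{2d\lambda_M\pi_{max}\,Err}$, and summing the two pieces and factoring out $\frac{1}{1-\gamma}$ gives the claim.

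The main obstacle, and the step that simultaneously produces the second factor of $\frac{1}{1-\gamma}$ and the $\nu_C$ in the denominator, is the on-policy monotonicity estimate $u^\top A u \ge (1-\gamma)\|u\|_C^2$ (the Tsitsiklis--Van Roy key inequality, obtained from $A = C-\gamma\Phi^\top K P\Phi$ and the non-expansiveness of the transition operator in $\|\cdot\|_\pi$). Combined with $\|u\|_C^2 \ge \nu_C\|u\|_2^2$ and Cauchy--Schwarz $u^\top A u \le \|u\|_2\|Au\|_2$, this yields $\|Au\|_2 \ge (1-\gamma)\nu_C\|u\|_2$, i.e. the smallest singular value of $A$ is at least $(1-\gamma)\nu_C$, which is exactly what turns the residual bound into a parameter bound with the stated constants. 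Care is also needed at the gap lower bound to guarantee that the unconstrained maximizer in $y$ and the point $x^\ast$ lie inside the bounded sets $\mathcal{X}_y,\mathcal{X}_x$, so that the identities $\max_y L(\tilde x,y)=\tfrac12 J(\tilde x)$ and $\max_y L(x^\ast,y)=0$ are valid; this is where the boundedness Assumption \ref{assumptioncompactness} and the solvability Assumption \ref{assrl nonsingular} enter.
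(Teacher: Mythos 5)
The paper does not actually prove this statement: it is imported verbatim as Proposition~4 of \cite{liu2015finite} (``Here we state three results from \cite{liu2015finite}'') and used as a black box in the proof of Theorem~2, so your proposal has to be judged as a reconstruction of the cited proof. Your architecture is sound and is a genuinely different route from the standard one behind the citation. The cited argument bounds $\Vert V-\tilde v_1^T\Vert_\pi$ via the projected-fixed-point decomposition $\Vert V-\hat v\Vert_\pi\le\frac{1}{1-\gamma}\left(\Vert V-\Pi V\Vert_\pi+\Vert\hat v-\Pi T^\mu\hat v\Vert_\pi\right)$ together with the identity $\Vert\hat v-\Pi T^\mu\hat v\Vert_\pi^2=\mathrm{MSPBE}(\tilde x_1^T)=\Vert b-A\tilde x_1^T\Vert^2_{C^{-1}}$, then converts the $C^{-1}$-norm residual into the $M^{-1}$-norm and into the primal-dual gap. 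You instead split at the TD fixed point $x^\ast$ and control the parameter error through the Tsitsiklis--Van Roy monotonicity $u^\top Au\ge(1-\gamma)\Vert u\Vert_C^2$, hence $\sigma_{\min}(A)\ge(1-\gamma)\nu_C$; both halves of that argument are correct on-policy, and your gap lower bound $Err(\tilde x_1^T,\tilde y_1^T)\ge\frac12\Vert b-A\tilde x_1^T\Vert^2_{M^{-1}}$ (with the caveat you rightly flag, that $x^\ast$ and the unconstrained maximizer $M^{-1}(b-A\tilde x_1^T)$ must lie in $\mathcal X_x,\mathcal X_y$) is exactly the link to the optimization error that the citation also relies on. Your route pays the second $\frac{1}{1-\gamma}$ through $\sigma_{\min}(A)$ rather than through the contraction of $\Pi T^\mu$, which is a nice, more structural explanation of where that factor and the $\nu_C$ in the denominator come from.

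The one step that does not go through as written is the final ``absorbing the feature-dependent constants.'' Your chain ends with $\Vert\Phi u\Vert_\pi\le\frac{1}{1-\gamma}\cdot\frac{\sqrt{2\lambda_C\lambda_M\,Err}}{\nu_C}$, and to reach the stated constant you would need $\lambda_C\le dL^2\pi_{max}$. That inequality is false in general: with features bounded by $L$ in max norm one gets $\lambda_C\le dL^2$, while the bound $\lambda_C\le\pi_{max}\lambda_{\max}(\Phi^\top\Phi)$ carries a state-space-dependent factor; for instance, take two states with (nearly) identical feature vectors all of whose entries equal $L$ and $\pi$ uniform, so that $\lambda_C$ is arbitrarily close to $dL^2>dL^2\pi_{max}$. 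Consequently your derivation proves the lemma with $\pi_{max}$ replaced by $1$, which is strictly weaker than the stated bound since $\pi_{max}\le1$; the $\sqrt{\pi_{max}}$ factor cannot be recovered from your norm chain. To be faithful to the statement, either present your conclusion with $\sqrt{2\lambda_C\lambda_M}$ in place of $L\sqrt{2d\lambda_M\pi_{max}}$, or replace the estimation-term bound by the $\mathrm{MSPBE}$/$C^{-1}$-norm route of \cite{liu2015finite}, from which the cited constants are obtained. Everything else --- the fixed-point split, the contraction bound $\Vert V-\Phi x^\ast\Vert_\pi\le\frac{1}{1-\gamma}\Vert V-\Pi V\Vert_\pi$, and the gap lower bound --- is correct.
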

\begin{lemma}[Proposition 5 of \cite{liu2015finite}]
	Let V be the value of the target policy and $ \tilde{v}_1^T = \Phi\tilde{x}_1^T $, where $ \tilde{x}_1^T$ is the value function returned by off policy GTD algorithms. Then   we have 
	$ \Vert V-\tilde{v}_1^T \Vert_\pi \le \frac{1+\gamma\sqrt{\rho_{max}}}{1-\gamma} \Vert V - \Pi V \Vert_\pi  + \sqrt{\frac{2\lambda_C \lambda_M \pi_{max}}{\nu_{A^\top M^{-1}A}}Err(\tilde{x}_1^T,\tilde{y}_1^T)}  $.
\end{lemma}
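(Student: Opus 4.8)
The plan is to split the total error into an \emph{approximation} part, intrinsic to the feature space and independent of the samples, and an \emph{estimation} part governed entirely by the primal--dual gap $Err(\tilde x_1^T,\tilde y_1^T)$ that Theorem \ref{s highprobability} already controls. Writing $v^\ast=\Phi x^\ast$ for the off-policy fixed point, where $x^\ast=(A^\top M^{-1}A)^{-1}A^\top M^{-1}b$ is the minimizer of $J(\theta)=\Vert b-A\theta\Vert_{M^{-1}}^2$, the triangle inequality in the $\pi$-weighted norm gives $\Vert V-\tilde v_1^T\Vert_\pi\le \Vert V-v^\ast\Vert_\pi+\Vert v^\ast-\tilde v_1^T\Vert_\pi$, and I would bound the two summands by entirely different mechanisms.

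For the approximation part I would use that $v^\ast$ is the fixed point of the projected Bellman operator $\Pi T^\mu$ and insert $\Pi V$, obtaining $\Vert V-v^\ast\Vert_\pi\le \Vert V-\Pi V\Vert_\pi+\Vert \Pi T^\mu V-\Pi T^\mu v^\ast\Vert_\pi$. The crucial issue, and the place the off-policy constant $\rho_{max}$ enters, is that $T^\mu$ is \emph{not} a $\gamma$-contraction in $\Vert\cdot\Vert_\pi$ once the importance weight $\rho$ reweights the transition kernel. I would control the $\rho$-weighted one-step transition by Cauchy--Schwarz / Jensen together with $\rho\le\rho_{max}$, which yields an effective one-step modulus bounded by $\gamma\sqrt{\rho_{max}}$; solving the resulting geometric inequality for $\Vert\Pi V-v^\ast\Vert_\pi$ and collecting terms produces the factor $\frac{1+\gamma\sqrt{\rho_{max}}}{1-\gamma}$ in front of $\Vert V-\Pi V\Vert_\pi$.

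For the estimation part I would tie $\Vert v^\ast-\tilde v_1^T\Vert_\pi$ back to the gap. Since $\max_y\big(\langle b-A\tilde x_1^T,y\rangle-\tfrac12\Vert y\Vert_M^2\big)=\tfrac12\Vert b-A\tilde x_1^T\Vert_{M^{-1}}^2=\tfrac12 J(\tilde x_1^T)$ and the saddle value is $\tfrac12 J(x^\ast)$, the inequality $\min_x L(x,\tilde y_1^T)\le L(x^\ast,\tilde y_1^T)\le \max_y L(x^\ast,y)=\tfrac12 J(x^\ast)$ gives $Err(\tilde x_1^T,\tilde y_1^T)\ge \tfrac12\big(J(\tilde x_1^T)-J(x^\ast)\big)$. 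Because $J$ is quadratic with vanishing gradient at $x^\ast$, the exact identity $J(\tilde x_1^T)-J(x^\ast)=(\tilde x_1^T-x^\ast)^\top A^\top M^{-1}A\,(\tilde x_1^T-x^\ast)\ge \nu_{A^\top M^{-1}A}\Vert \tilde x_1^T-x^\ast\Vert^2$ turns the gap into a bound on the parameter error. I would then push this through the feature map via $\Vert v^\ast-\tilde v_1^T\Vert_\pi^2=(\tilde x_1^T-x^\ast)^\top C(\tilde x_1^T-x^\ast)$, and a chain of PSD comparisons together with the change of measure between the data-generating distribution and the target weighting $\pi$ (contributing $\lambda_C$, $\lambda_M$ and $\pi_{max}$ respectively) yields $\Vert v^\ast-\tilde v_1^T\Vert_\pi\le \sqrt{\tfrac{2\lambda_C\lambda_M\pi_{max}}{\nu_{A^\top M^{-1}A}}\,Err(\tilde x_1^T,\tilde y_1^T)}$. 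Adding the two parts gives the claimed inequality.

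I expect the approximation part to be the main obstacle: unlike the on-policy case (Proposition 4), $\Pi T^\mu$ need not contract in $\Vert\cdot\Vert_\pi$, so extracting the $\frac{1+\gamma\sqrt{\rho_{max}}}{1-\gamma}$ factor requires careful control of the importance-weighted kernel rather than a textbook contraction. The second most delicate piece is the bookkeeping in the estimation term, namely tracking precisely where $\lambda_M$ and $\pi_{max}$ enter the norm equivalences when converting the Euclidean parameter error into the $\pi$-weighted function-space error.
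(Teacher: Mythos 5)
You should first note that the paper contains no proof of this statement at all: it is quoted verbatim as Proposition~5 of \cite{liu2015finite} and used as a black box in the proof of Theorem~2, so your proposal has to be judged against the source result rather than against any internal argument. Your estimation half is essentially the correct (and the cited) mechanism: $\max_y L(\tilde x_1^T,y)=\tfrac12\Vert b-A\tilde x_1^T\Vert_{M^{-1}}^2$ (with the caveat, worth stating, that the algorithm projects $y$ onto $\mathcal{X}_y$, so one needs $\mathcal{X}_y$ large enough to contain the unconstrained maximizer $M^{-1}(b-A\tilde x_1^T)$), the saddle value vanishes because $A$ is nonsingular (Assumption~3), and the quadratic identity through $A^\top M^{-1}A$ plus spectral comparisons produces the $\sqrt{2\lambda_C\lambda_M\pi_{max}/\nu_{A^\top M^{-1}A}}$ factor; modulo the bookkeeping of where $\lambda_M$ and $\pi_{max}$ enter, this is sound.

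The approximation half, however, has a genuine gap, and it sits exactly at the step you yourself flagged as the main obstacle. Your plan --- establish a one-step modulus $\gamma\sqrt{\rho_{max}}$ for the importance-weighted Bellman operator and ``solve the resulting geometric inequality'' for the fixed point $v^\ast=\Pi T^\mu v^\ast$ --- can only ever yield a bound of the form $\Vert V-v^\ast\Vert_\pi\le\frac{1}{1-\gamma\sqrt{\rho_{max}}}\Vert V-\Pi V\Vert_\pi$, which is not the claimed $\frac{1+\gamma\sqrt{\rho_{max}}}{1-\gamma}\Vert V-\Pi V\Vert_\pi$. The two constants are not interchangeable: the claimed one is finite for every $\rho_{max}$, whereas yours requires $\gamma\sqrt{\rho_{max}}<1$, a condition nowhere assumed and typically false off-policy, where importance ratios are large. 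Indeed, off-policy $\Pi T^\mu$ need not contract in $\Vert\cdot\Vert_\pi$ in any norm-equivalent sense, and no contraction or geometric-series argument closes unconditionally; this is precisely why \cite{liu2015finite} do not derive this factor themselves but import it from Kolter's (2011) analysis of off-policy TD fixed points, which obtains $\frac{1+\gamma\sqrt{\rho_{max}}}{1-\gamma}$ by a different, non-contraction argument carrying its own stability condition on the behavior distribution. So your proof would fail structurally, not just technically, at this step: completing it requires reproving or importing that fixed-point analysis, not refining the Cauchy--Schwarz control of the reweighted kernel.
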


\begin{proof}[\textbf{Proof of Theorem \ref{s gtdbound}}]
	Substitute  Lemma \ref{s lemma norm bound} into Lemma \ref{s prop gm bound} yields Proposition 1 in the main paper. Then using Proposition 1 together with  Proposition 4-5 of \cite{liu2015finite} we can obtain the Theorem \ref{s gtdbound}.
	%			The proof of Theorem \ref{s gtdbound} is obvious. by using   above   Theorem \ref{s expectationbound}-\ref{s highprobability}.
	
\end{proof}

\subsection{Proof of lemmas}\label{s sec proof of lemmas}

\begin{proof}[\textbf{Proof of lemma \ref{s lemma-min}}]
	Firstly, we will proof   $  -\min_x (f(x)) \le \max_x (- f(x)) $    :
	\begin{equation}    
	\begin{aligned}
	&\forall x \ \ &\max(-f(x)) &\ge -f(x)  \\
	&&f(x) &\ge -\max(-f(x))                \\
	&\text{so}  &\min(f(x)) &\ge -\max(-f(x))       \\
	&&  -\min(f(x)) &\le \max(-f(x))
	\end{aligned}
	\end{equation}  
	
	Then, we will proof  $  -\min_x (f(x)) \ge \max_x (- f(x)) $    :
	
	\begin{equation}    
	\begin{aligned}
	&\forall x \ \ &\min( f(x) ) &\ge f(x)  \\
	&& -\min( f(x) ) &\le -f(x)     \\
	&\text{so} &-\min_x f(x) &\ge \max_x (-f(x))    \\
	\end{aligned}
	\end{equation}  
	Combine the above inequality, we get the result (\ref{s -min}).
\end{proof}

\begin{proof}[\textbf{Proof of lemma \ref{s keykeylemma}}]
	Using the iteration formula together with the definition of L2-norm, we can get for  $ \forall x^* \in \mathbb{R}^n $
	\begin{align*}
	\Vert x_{t+1} - x^* \Vert^2 -  \Vert x_{t}-x^* \Vert^2 &= \Vert x_{t} + \Delta_t - x^* \Vert^2 -  \Vert x_{t}-x^* \Vert^2\\
	&=  \Vert x_{t} - x^* \Vert^2 + 2\langle  x_t-x^*, \Delta_t \rangle  + \Vert \Delta_t \Vert^2 -  \Vert x_{t}-x^* \Vert^2
	\end{align*}            
	So equation \ref{s keykeyeq} can be obtained by moving the second term from right hand to left hand. And inequation \ref{s keykeyeq2} hold because projection is a contraction mapping with respect to $ \mathcal{ L}_2 $ norm.
	
\end{proof}

\begin{proof}[\textbf{Proof of lemma \ref{s lemma alphaz}}]
	\begin{align*}
	&   \Vert  \alpha_{t_1}z_{t_1} -\alpha_{t_2 +1}z_{t_2+1}  \Vert  \\
	=   &   \Vert \sum_{s=t_1}^{t_2} \alpha_sz_s -\alpha_{s+1}z_{s+1}   \Vert  \\
	\le &   \Vert  \sum_{s=t_1}^{t_2} \alpha_sz_s -\alpha_{s+1}(z_{s} - \alpha_sG(z_s,\xi_s))   \Vert  \\
	\le &   \Vert \sum_{s=t_1}^{t_2} \alpha_s z_s -\alpha_{s+1}z_{s}    \Vert  +    \Vert \sum_{s=t_1}^{t_2} ( \alpha_{s+1} \alpha_sG(z_s,\xi_s))   \Vert    \\
	\le &  \sum_{s=t_1}^{t_2}   \Vert \alpha_s  -\alpha_{s+1}   \Vert   \Vert  z_{s}    \Vert +     \Vert \sum_{s=t_1}^{t_2} ( \alpha_{s+1} \alpha_sG(z_s,\xi_s))   \Vert   \\
	\le &   D \sum_{s=t_1 }^{t_2 }  ( \alpha_s   -   \alpha_{s+1}  )  +  \sum_{s=t_1}^{t_2} ( \alpha_{s+1} \alpha_s  \Vert G(z_s,\xi_s)\Vert)    \\
	%               \overset{(a)}{\le}  &  %\alpha_{t_1}\Vert x_{t_1} \Vert 
	%               \alpha_{t_1}  z_{t_1}  -\alpha_{t_2}  z_{t_2}   + \sum_{s=t_1+1}^{t_2-1}\alpha_s \alpha_sG(z_s,\xi_s)
	%               +  \sum_{s=t_1}^{t_2} ( \alpha_{s+1} \alpha_sG(z_s,\xi_s))\\    
	%               \le & \alpha_{t_1}\Vert z_{t_1}\Vert -\alpha_{t_2}\Vert z_{t_2}\Vert  + G\sum_{s=t_1+1}^{t_2-1}\alpha_s^2
	%               + +  \sum_{s=t_1}^{t_2} ( \alpha_{s+1} \alpha_sG(z_s,\xi_s))\\      
	\le & D  (\alpha_{t_1 } - \alpha_{t_2+1}  ) 
	+  L_{1}(t_2 - t_1  )\alpha_{t_1 }^2 .
	\end{align*}
\end{proof}

\begin{proof}[\textbf{Proof of lemma \ref{s terma}}]
	
	Applying Lemma \ref{s keykeylemma} by setting  $ x_t = z_t $ , $ x^* = z $ and $ \Delta_t = -\alpha_t G(z_t,\xi_t) $  can get  :
	\begin{align*}
	&\sum\limits_{t=1}^{T-\tau}\alpha_t \left[(z_t-z)^\top  G(z_t,\xi_{t})\right] \\ 
	\le     &\sum\limits_{t=1}^{T-\tau} \frac{1}{2}\left[\Vert \alpha_t G(z_t,\xi_t) \Vert^2  + \Vert z_t - z \Vert ^2 - \Vert z_{t+1}  - z\Vert^2 \right]\\
	\le     &\frac{1}{2} \Vert z_1 - z \Vert^2 + \frac{1}{2}L_{1}^2  \sum\limits_{t=1}^{T-\tau}  \alpha_t^2
	\end{align*}
	%                   Then, if Assumption \ref{s assumptionsinglestep} holds, the first statement of the proposition \ref{s terma} follows by taking $ \max $ on the first term and taking expectation on the second term.
	%                   \begin{align*}
	%                   \E\left[ \max_z \sum\limits_{t=1}^{T-\tau}\alpha_t \left[(z_t-z)^\top  G(z_t,\xi_{t})\right]  \right] \le \frac{1}{2}D^2 + \frac{1}{2}G^2\sum_{t=1}^{T-\tau}\alpha_t^2
	%                   \end{align*}
	The   statement follows by taking $ max $ on the first term directly.
	
\end{proof}

\begin{proof}[\textbf{Proof of lemma \ref{s termd}}]
	The proof is entirely similar to the proof of \ref{s terma} if we  set $ x_t = v_t $ , $ x^* = z $ and $ \Delta_t = -\alpha_t \delta_t $.
\end{proof}\

\begin{proof}[\textbf{Proof of lemma \ref{s martingale}}]
	We construct a family of martingales, each of which we control with high probability. We begin by defining the following random variables
	\begin{align*}
	&   A_t \triangleq \alpha_{t-\tau}(z_{t-\tau }-v_{t-\tau })^\top (g(z_{t-\tau  }) - G(z_{t-\tau },\xi_{t}))\\               
	&   \sum_{t=\tau+1}^{T}A_t =\sum_{t=1}^{T-\tau}\E\left[\alpha_t(z_t-v_t)^\top (g(z_t) - G(z_t,\xi_{t+\tau})) \right]                    
	\end{align*} 
	Define the filtration of $ \sigma- $fields $ \mathcal{A}_i^j =\huaf_{\tau i + j}$ for $ j = 1, \dots , \tau $. Then we can construct $ \tau $ sets of martingales $ \lbrace B_1^j,B_2^j,\dots \rbrace $ for $ j = 1,2,\dots,\tau $ :
	\begin{align*}
	B_i^j = A_{i\tau + j} - \E\left[A_{i\tau + j} |\mathcal{A}_{i-1}^{j}\right]\\
	\end{align*}
	By definition, $ B_i^j $  is measurable with respect to $ \mathcal{A}_i^j $, and $ \E[B_i^j|\mathcal{A}_{i-1}^j] =0 $. So for each j, the sequence $ \lbrace B_i^j ,i=1,2,\dots \rbrace $ is a martingale difference sequence adapted to the filtration $ \mathcal{A}_i^j $.
	For a fixed  $ j_0 \in 1,2,\dots, \tau $, the index $ i $ for martingale sequence $ B_i^{j_0} $ can take value from the index set $ \mathcal{I}(j_0) $,\
	$$ \mathcal{I}(j) =  \left\lbrace \begin{array}{lr}
	\mathcal{I}_1 =\lbrace 1,\dots,\lfloor T/\tau \rfloor +1 \rbrace  \quad &if \quad  j\le T - \tau\lfloor T/\tau \rfloor\\ 
	\mathcal{I}_2 = \lbrace 1,\dots,\lfloor T/\tau \rfloor \rbrace  \quad &if \quad  j > T - \tau\lfloor T/\tau \rfloor\\ 
	\end{array} \right. $$
	
	\begin{align*}
	\sum_{t=\tau+1}^{T}A_t = \sum_{j=1}^{\tau}\sum_{i\in \mathcal{I}(j)} B_i^j + \sum_{t=\tau+1}^{T}\E[A_t|\huaf_{t-\tau}]\numberthis\label{s At}
	\end{align*}
	
	Notice
	\begin{align*}
	|B_i^j| = |A_{i\tau + j} - \E\left[A_{i\tau + j} |\mathcal{A}_{i-1}^{j}\right]| \le|A_{i\tau + j}| + |\E\left[A_{i\tau + j} |\mathcal{A}_{i-1}^{j}\right]|\le 8DL_1\alpha_{i\tau+j}
	\end{align*}
	So applying the triangle inequality  and Azuma's inequality, we can bound  the martingale difference sequence term of \ref{s At} :
	
	\begin{align*}
	P\left(\sum_{j=1}^{\tau}\sum_{i\in \mathcal{I}(j)} B_i^j > \gamma \right)\le \sum_{j=1}^{\tau}  P\left(\sum_{i\in \mathcal{I}(j)} B_i^j > \frac{\gamma}{\tau}\right)\le \sum_{j=1}^{\tau}\exp\left(-\frac{\gamma^2}{128D^2L_1^2\tau^2\sum_{i\in \mathcal{I}(j)} \alpha_{i\tau }^2 }\right)
	\end{align*}
	
	Notice  that $ \tau(\alpha_{i\tau} )^2\le \sum_{j=1}^{\tau}(\alpha_{(i-1)\tau+j} )^2   $. So   $   \sum_{i\in \mathcal{I}(j)} \tau(\alpha_{i\tau })^2  \le \sum_{t=1}^{T}\alpha_t^2 + \tau\alpha_0~ \text{ for } ~\forall j $
	
	Setting $ \gamma = 8DL_1 \sqrt{2\tau\log\frac{\tau}{\delta}  (\sum_{t=1}^{T}\alpha_t^2 + \tau\alpha_0) } $, we get
	$$                  P\left(\sum_{j=1}^{\tau}\sum_{i\in \mathcal{I}(j)} B_i^j > 8DL_1 \sqrt{2\tau \log\frac{\tau}{\delta} \left(\sum_{t=1}^{T}\alpha_t^2 + \tau\alpha_0 \right)} \right)\le \delta $$

	For the last term of \ref{s At}, recall the Lemma \ref{s termb}
	\begin{align*}
	|\E[A_t|\huaf_t-\tau]| \le 2DL_1   \int\vert \pi(\xi) -p_{[t]}^{t+\tau}(\xi) \vert d\xi 
	\end{align*}
	
	combine the above bound completes the proof.
\end{proof}

\begin{proof}[\textbf{Proof of lemma \ref{s termc}}]
	Rearrange the left hand side of the above inequation, we can get
	\begin{align*}
	&    \sum_{t=1}^{T-\tau}\left[\alpha_t(z_t-v_t)^\top (G(z_t,\xi_{t+\tau}) - G(z_t,\xi_{t})) \right] \\
	=   &   \underbrace{\sum_{t=1+\tau}^{T-\tau}\left[\alpha_{t-\tau}(z_{t-\tau}-v_{t-\tau})^\top G(z_{t-\tau} , \xi_{t})- \alpha_t(z_{t}-v_{t})^\top G(z_{t} , \xi_{t}) \right]}_{(f)} \\
	&  +  \sum_{t = T-\tau}^{T}\alpha_{t-\tau}(z_{t-\tau}-v_{t-\tau})^\top G(z_{t-\tau} , \xi_{t}) -  \sum_{t = 1}^{\tau}\alpha_t(z_{t}-v_{t})^\top G(z_{t} , \xi_{t}) \\
	\end{align*}
	Considering the term $ (f) $:
	
	\begin{align*}
	&    \sum_{t=1+\tau}^{T-\tau}\left[\alpha_{t-\tau}(z_{t-\tau}-v_{t-\tau})^\top G(z_{t-\tau} , \xi_{t})- \alpha_t(z_{t}-v_{t})^\top G(z_{t} , \xi_{t}))\right]\\
	=   &    \sum_{t=1+\tau}^{T-\tau}\alpha_{t-\tau}(z_{t-\tau}-v_{t-\tau})^\top (G(z_{t-\tau} ,  \xi_{t})- G(z_{t} , \xi_{t})) \\ 
	&   + \sum_{t=1+\tau}^{T-\tau}[\alpha_{t-\tau}(z_{t-\tau}-v_{t-\tau}) - \alpha_t(z_{t}-v_{t})] ^\top G(z_{t} , \xi_{t})\\
	\le &    \sum_{t=1+\tau}^{T-\tau}\alpha_{t-\tau} \Vert z_{t-\tau}-v_{t-\tau}  \Vert \Vert G(z_{t-\tau}  , \xi_{t})- G(z_{t}  , \xi_{t})\Vert \\ 
	&   +  \sum_{t=1+\tau}^{T-\tau}(\alpha_{t-\tau}z_{t-\tau}- \alpha_t z_{t} )^ \top  G(z_{t}  ,  \xi_{t}) +\sum_{t=1+\tau}^{T-\tau} (\alpha_t v_{t} -\alpha_{t-\tau} v_{t-\tau} )^\top G(z_{t}  , \xi_{t})    \\
	\overset{(1)}{\le} &L_{2} \sum_{t=1+\tau}^{T-\tau}\alpha_{t-\tau}\Vert z_{t-\tau}-v_{t-\tau}\Vert \Vert \sum_{s=t-\tau}^{t-1}(z_{s} - z_{s+1}) \Vert\\
	&   + L_{1} \sum_{t=1+\tau}^{T-\tau}  \Vert \alpha_{t-\tau}z_{t-\tau}- \alpha_{t} z_{t}\Vert  + L_{1}\sum_{t=1+\tau}^{T-\tau}\alpha_{t-\tau} \Vert   v_{t} - v_{t-\tau}\Vert  \\
	\overset{ }{\le} &  2DL_{1}L_{2}\tau \sum_{t=\tau+1}^{T-\tau}\alpha_{t-\tau}^2 + 2L_{1}^2\tau\sum_{t=1+\tau}^{T-\tau}\alpha_{t-\tau}^2 + 4L_{1}^2\tau\sum_{t=1+\tau}^{T-\tau}\alpha_{t-\tau}^2 +   \tau   \alpha_0 DL_1\\
	\overset{ }{\le} &  2L_{1}\tau(3L_{1}+DL_{2} )\sum_{t=\tau+1}^{T-\tau}\alpha_{t-\tau}^2 +   \tau   \alpha_0 DL_1
	\end{align*}
	The first term of (1) can be bounded by using the iteration formula of $ z_t $.\\ 
	The second term of (1) can be bounded by using the Lemma \ref{s lemma alphaz}.\\
	The third term of (1) can be bounded by using the iteration formula of $ v_t $.
	
	In conclusion, we  get the Lemma \ref{s termc}
	\begin{align*}
	&    \sum_{t=1}^{T-\tau}\left[\alpha_t(z_t-v_t) (G(z_t,\xi_{t+\tau}) - G(z_t,\xi_{t})) \right] \\
	\le &       2L_{1}\tau(3L_{1}+DL_{2})\sum_{t=\tau+1}^{T-\tau}\alpha_{t-\tau}^2 +  \tau \alpha_0 DL_1 + 2DL_{1}\sum_{t=1}^{\tau}\alpha_t + 2DL_{1}\sum_{t=T-\tau}^{T}\alpha_{t-\tau} \\
	= &             2L_{1}\tau(3L_{1}+DL_{2})\sum_{t=\tau+1}^{T-\tau}\alpha_{t-\tau}^2 + 5 \tau \alpha_0 D L_1 .
	\end{align*}
	
	%           The proof of second statement of the lemma is similar, other than we do not need conditional expectations to bound the derived function.
	
\end{proof}

\begin{proof}[\textbf{Proof of lemma \ref{s termb}}]
	\begin{align*}      
	&\E\left[  (z_t-v_t)^\top \delta'_t | \huaf_t\right]        \\
	=&  \E\left[\alpha_t(z_t-v_t)^\top (g(z_t) - G(z_t,\xi_{t+\tau}))| \huaf_t \right]  \\
	=&  (z_t-v_t)^\top \left[\E  (g(z_t) - G(z_t,\xi_{t+\tau})) | \huaf_t \right]   \\      
	=&     (z_t-v_t)^\top \int   G(z_t,\xi) (\pi(\xi) -p_{[t]}^{t+\tau}(\xi) )d\xi  \\
	\le & 2DL_{1}  \int\vert \pi(\xi) -p_{[t]}^{t+\tau}(\xi) \vert d\xi 
	\end{align*}
	
	Taking expectation to the above inequation then summarization $ t $ from 1 to $ T-\tau $ we can get
	\begin{align*}
	\E\left[  (z_t-v_t)^\top \delta'_t |\huaf_t \right] &\le  2DL_{1}  \int\vert \pi(\xi) -p_{[t]}^{t+\tau}(\xi) \vert d\xi             \\  
	\E\left[ \sum_{t=1}^{T-\tau}\alpha_t(z_t-v_t)^\top \delta'_t \right] &\le  2DL_{1}\sum_{t=1}^{T-\tau}\alpha_t\E\int\vert \pi(\xi) -p_{[t]}^{t+\tau}(\xi) \vert d\xi .
	\end{align*}
\end{proof}

\begin{proof}[\textbf{Proof of lemma \ref{s lemma norm bound}}]
	The proof is similar to the proof of Proposition 3 in \cite{liu2015finite}. 
	Notice that in the specific RL problem setting our convex-concave problem can be written as
	\begin{align*}
	\min_x\max_y\left( L(x,y) = \langle b-Ax,y \rangle - \frac{1}{2}\Vert y\Vert^2_M \right),
	\end{align*} 
	the stochastic gradient vector $ G(x,y,\xi_t) $ can be written as
	\begin{align*}
	G(x,y,\xi_t) &= \left [ 
	\begin{array}{c}
	-\hat{A}_t^\top y \\ 
	- (\hat{b}_t-\hat{A}_t x -\hat{M}_ty)
	\end{array}
	\right ]  .\\                        
	\end{align*}
	Similar to the Lemma 2 in \cite{liu2015finite}, by using Assumption \ref{s assrl bound} and the definition of $ \hat{b},\hat{A}_t ,\hat{M}_t $ we can see, for $\forall \xi_t  $,$ \Vert G(x,y,\xi_t)\Vert^2  = 2 \cdot (\Vert \hat{A}_t^\top y \Vert^2 + \Vert \hat{b}_t-\hat{A}_t x -\hat{M}_ty \Vert^2 ) \le 2\cdot( \Vert A \Vert ^2 D^2 + (\Vert b \Vert   + (\Vert A \Vert + \lambda_M )D )^2 )\le  2\cdot(   2\Vert A \Vert D  +  \Vert b \Vert   +  \lambda_M  D   )^2  $.
	
	So Lipschitz constant $ L_{1} $ can be set to the upper bound of the gradient.
	
	The smooth constant $ L_{2} $ can be set similarly. $2\cdot (\Vert \frac{\partial G(x,y,\xi_t)}{\partial x}\Vert^2 +  \Vert \frac{\partial G(x,y,\xi_t)}{\partial y}\Vert^2 )\le 2\cdot (   ( \Vert A \Vert + \lambda_M)^2 +\Vert A\Vert^2)  \le 2\cdot(2\Vert A \Vert +\lambda_M)^2 $.
	
	%           $ \Vert \hat{A}_t \Vert  \le (1+\gamma)\rho_{max} L^2 d$  
	
\end{proof}

		\newpage

	\bibliography{nips_2017}
	\bibliographystyle{plain}

\end{document}